\newcommand{\argmax}{\operatornamewithlimits{\arg \max}}
\newcommand{\argmin}{\operatornamewithlimits{\arg \min}}
\newcommand{\local}{{\textrm{local}}}
\newcommand{\bbE}{{\mathbb{E}}}
\newcommand{\bbP}{{\mathbb{P}}}
\newcommand{\bbR}{{\mathbb{R}}}
\newcommand{\bs}{{\mathbf{s}}}
\newcommand{\bx}{{\mathbf{x}}}
\newcommand{\by}{{\mathbf{y}}}
\newcommand{\bX}{{\mathbf{X}}}
\newcommand{\bI}{{\boldsymbol I}}
\newcommand{\bk}{{\boldsymbol k}}
\newcommand{\bK}{{\boldsymbol K}}
\newcommand{\bL}{{\boldsymbol L}}
\newcommand{\bxi}{{\boldsymbol \xi}}
\newcommand{\calD}{{\mathcal{D}}}
\newcommand{\calN}{{\mathcal{N}}}
\newcommand{\calX}{{\mathcal{X}}}
\newcommand{\figref}[1]{Fig.~\ref{#1}}
\newcommand{\tabref}[1]{Table~\ref{#1}}
\newcommand{\corref}[1]{Corollary~\ref{#1}}
\newcommand{\thmref}[1]{Theorem~\ref{#1}}
\newcommand{\lemref}[1]{Lemma~\ref{#1}}
\newcommand{\defref}[1]{Definition~\ref{#1}}
\spnewtheorem{defn}{Definition}{\bfseries}{\itshape}
\spnewtheorem{thm}{Theorem}{\bfseries}{\itshape}
\spnewtheorem{lem}{Lemma}{\bfseries}{\itshape}
\spnewtheorem{cor}{Corollary}{\bfseries}{\itshape}
\begin{document}
\title{On Local Optimizers of Acquisition Functions\\in Bayesian Optimization}
\titlerunning{On Local Optimizers of Acquisition Functions in Bayesian Optimization}
\author{Jungtaek Kim\inst{1}\textsuperscript{(\Letter)} \and
Seungjin Choi\inst{2}}
\authorrunning{J. Kim and S. Choi}
\institute{Pohang University of Science and Technology, Pohang, Republic of Korea\\
\email{jtkim@postech.ac.kr}
\and
Inference Lab, BARO AI, Seoul, Republic of Korea\\
\email{seungjin@baroai.com}}
\maketitle
\setcounter{footnote}{0}
%
%
%
\begin{abstract}
Bayesian optimization is a sample-efficient method for finding a global optimum 
of an expensive-to-evaluate black-box function.
A global solution is found by accumulating a pair of query point and its function value,
repeating these two procedures:
(i) modeling a surrogate function;
(ii) maximizing an acquisition function to determine where next to query.
Convergence guarantees are only valid when the global optimizer 
of the acquisition function is found at each round
and selected as the next query point.
In practice, however, local optimizers of an acquisition function are also used, 
since searching for the global optimizer is often a non-trivial or time-consuming task.
In this paper we consider three popular acquisition functions, PI, EI, and GP-UCB 
induced by Gaussian process regression. 
Then we present a performance analysis on the behavior of local optimizers 
of those acquisition functions, in terms of {\em instantaneous regrets} over global optimizers.
We also introduce an analysis, 
allowing a local optimization method to start from multiple different initial conditions.
Numerical experiments confirm the validity of our theoretical analysis.

\keywords{Global optimization \and Bayesian optimization \and Acquisition function optimization \and Instantaneous regret analysis.}
\end{abstract}
%
%
%
\section{Introduction\label{sec:intro}}

Bayesian optimization provides an efficient method for finding a global optimum of an objective function
$f(\bx): \calX \rightarrow \bbR$, defined over a compact set $\calX \subset \bbR^d $:
\begin{equation}
\bx^\dagger = \argmin_{\bx \in \calX} f(\bx),
\label{eqn:naive_go}
\end{equation}
where, in general, $f(\bx)$ is a black-box function, i.e., its closed-form expression is not available
and its gradient is not available either.
The value of the function can be computed at a query point $\bx$ but the evaluation requires a high cost.
In this paper we assume that the objective function $f(\bx)$ of interest is Lipschitz-continuous.
Bayesian optimization searches for a minimum of $f(\bx)$ to solve the problem \eqref{eqn:naive_go}, 
gradually accumulating $(\bx_t, f(\bx_t))$ where input points $\bx_t$ are carefully chosen and 
corresponding function values $f(\bx_t)$ are calculated at $\bx_t$.
It provides an efficient approach in terms of the number of function evaluations required.

In Bayesian optimization, a global solution to the problem \eqref{eqn:naive_go} is determined by repeating the following two procedures.
At each round, we first train a probabilistic model\footnote{Gaussian process regression is used in this paper.}
using the data observed so far to construct a surrogate function for $f(\bx)$.
Then we define an acquisition function~\cite{KushnerHJ1964jbe,MockusJ1978tgo,SrinivasN2010icml} over the domain $\calX$, 
which accounts for the utility provided 
by possible outcomes drawn from the distribution determined by the surrogate model.
The maximization of an acquisition function, referred to as an {\em inner optimization,} yields the selection of 
the next query point at which to evaluate the objective function. 
Convergence guarantees are only valid when the global optimizer of the acquisition function is found
and selected as the next query point.
In practice, however, local optimizers of acquisition functions are also used, since searching for the exact
optimizer of the acquisition function is often a non-trivial or time-consuming task.

A recent work~\cite{WilsonJT2018neurips} has addressed the acquisition function optimization, 
elucidating gradient-based optimization of Monte Carlo estimates of acquisition functions,
as well as on sub-modularity for a family of maximal myopic acquisition functions.
However, so far, there is no study on what the performance loss is when a local optimizer of an
acquisition function is selected as the next query point.
In this paper we attempt to provide an answer to this question on the performance loss brought by
local optimizers of acquisition functions over global optimizers, in terms of instantaneous regrets.
To this end, we consider three different solutions to the maximization of an acquisition function:
(i) a global optimizer; (ii) a local optimizer; (iii) a multi-started local optimizer. 
For performance analysis of local optimizers, with respect to the global optimizer, we define 
an {\em instantaneous regret difference} for a local optimizer as well as for a multi-started local optimizer
and present its bound for each case.
As expected, the multi-started local optimizer yields a tighter bound on the instantaneous regret difference,
compared to the one for a local optimizer.

In this paper we consider three popular acquisition functions, probability of improvement (PI)~\cite{KushnerHJ1964jbe}, 
expected improvement (EI)~\cite{MockusJ1978tgo}, 
and Gaussian process upper confidence bound (GP-UCB)~\cite{SrinivasN2010icml}, each of which is calculated
by posterior mean and variance determined by Gaussian process regression.
The main contribution of this paper is summarized as:
\begin{itemize}
\item 
We provide an upper bound on the instantaneous regret difference between global and local optimizers, which is given in \thmref{thm:first};
\item 
We provide an upper bound on the instantaneous regret difference when a multi-started local optimization method 
is employed to search for a local maximum of the acquisition function, which is given in \thmref{thm:second};
\item 
Numerical experiments are provided to justify our theoretical analyses.
\end{itemize}

\section{Background\label{sec:background}}

In this section, we briefly review Bayesian optimization, the detailed
overview of which is referred to \cite{BrochuE2010arxiv,ShahriariB2016procieee,FrazierPI2018arxiv},
and define instantaneous regret difference that
is used as a performance measure for local optimizers of acquisition functions.
We also explain global and local optimization methods that are popularly used 
to search for maxima of acquisition functions.

\subsection{Bayesian Optimization\label{subsec:bo}}

The Bayesian optimization strategy solves the problem \eqref{eqn:naive_go}, by gradually selecting
queries $\bx_1, \ldots, \bx_T$ and their corresponding noisy evaluations $y_1, \ldots, y_T$
where $y_t = f(\bx_t) + \epsilon_t$ 
with $\epsilon_t \sim \calN(0,\sigma_n^2)$, such that a minimizer of
$f(\bx)$ is determined from $\{\bx_1, \ldots, \bx_T\}$.
Given the data  $\calD_{t-1} = \{ (\bx_1, y_1), \ldots, (\bx_{t-1},y_{t-1}) \}$ observed up to round $t-1$, 
the next point $\bx_t$ is chosen as a maximizer of acquisition function $a(\bx | \calD_{t-1})$, i.e.,
\begin{equation}
\bx_t = \argmax \, a(\bx | \calD_{t-1}).
\label{eqn:aqmax}
\end{equation}

The acquisition function is the expected utility $u$ of a query $\bx$:
\begin{equation}
a(\bx | \calD_{t-1}) = \int u(\bx, y) \, p(y | \bx, \calD_{t-1}) \, \mathrm{d}y,
\end{equation}
where the posterior distribution $p(y | \bx, \calD_{t-1})$ is calculated by Gaussian process regression using $\calD_{t-1}$ here.

Solving \eqref{eqn:aqmax} is another optimization problem appearing in the Bayesian optimization task 
given in \eqref{eqn:naive_go}.
We consider three different solutions to the maximization of an acquisition function,
defined in detail below.

\begin{defn}[Global optimizer]
	\label{def:go_af}
	We denote by $\bx_{t,g}$ the optimizer of the acquisition function $a(\bx | \calD_{t-1})$ at round $t$, determined
	by a global optimization method, given a time budget $\tau$:
	\begin{equation}
		\bx_{t, g} = \stackrel{\textrm{global}}{\argmax}_{\bx \in \calX} a(\bx | \calD_{t-1}).
		\label{eqn:global_acq}
	\end{equation}
	$\bx_{t, g}$ is referred to as a global optimizer.
\end{defn}

\begin{defn}[Local optimizer]
	\label{def:lo_af}
	We denote by $\bx_{t, l}$ the optimizer of the acquisition function $a(\bx | \calD_{t-1})$ at round $t$, 
	determined by an iterative (local) optimization method where the convergence meets 
	${\| \bx_{t, l}^{(\tau)} - \bx_{t, l}^{(\tau - 1)} \|_2 \leq \epsilon_{\textrm{opt}}}$ for iteration ${\tau}$:
	\begin{equation}
		\bx_{t, l} = \stackrel{\local}{\argmax}_{\bx \in \calX} a(\bx | \calD_{t-1}).
		\label{eqn:local_acq}
	\end{equation}
         $\bx_{t, l}$ is referred to as a local optimizer.
\end{defn}

\begin{defn}[Multi-started local optimizer]
	\label{def:mslo_af}
	Suppose that $\{ \bx_{t, l_1}, \ldots, \bx_{t, l_N} \}$ is a set of $N$ local optimizers, each of which is
	determined by a local optimization method \eqref{eqn:local_acq}, starting from a different initial condition.
	The multi-started local optimizer, denoted by $\bx_{t, m}$, is the one at which $a(\bx | \calD_{t-1})$
	achieves the maximum:
	\begin{equation}
		\bx_{t, m} = \stackrel{\textrm{m-local}}{\argmax}_{\bx \in \calX} a(\bx | \calD_{t-1}).
		\label{eqn:multi_local_acq}
	\end{equation}
\end{defn}

With solutions to \eqref{eqn:aqmax}, defined in \eqref{eqn:global_acq}, \eqref{eqn:local_acq},
and \eqref{eqn:multi_local_acq}, we define {\em instantaneous regret} for each of these solutions
and {\em instantaneous regret difference} for each of local solutions
below.

\begin{defn}[Instantaneous regret]
	\label{def:instant_regret}
	Suppose that $\bx^{\dagger}$ is the true global minimum of the objective function in \eqref{eqn:naive_go}.
	Denote by $\bx_t$ a maximum of acquisition function $a(\bx | \calD_{t-1})$ at round $t$, determined by either
	a global or local optimization method.
	The instantaneous regret $r_t $ at round $t$ is defined as
	\begin{equation}
		r_t = f(\bx_t) - f(\bx^{\dagger}).
		\label{eqn:instant_regret}
	\end{equation}
	Depending on an optimization method (i.e., one of global, local, and multi-started local optimization methods) used 
	to search for a maximum of the acquisition function, we define the following instantaneous regrets:
	$r_{t, g} = f(\bx_{t, g}) - f(\bx^\dagger)$, 
	$r_{t, l} = f(\bx_{t, l}) - f(\bx^\dagger)$, and 
	$r_{t, m} = f(\bx_{t, m}) - f(\bx^\dagger)$.
\end{defn}

\begin{defn}[Instantaneous regret difference]
With \defref{def:instant_regret}, we define 
instantaneous regret differences for an local optimizer $\bx_{t, l}$ 
and for a multi-started local optimizer $\bx_{t, m}$:
\begin{align}
\label{eqn:rdtl}
\left| r_{t, g} - r_{t, l} \right| &= \left|  f(\bx_{t, g}) - f(\bx_{t, l}) \right|, \\
\label{eqn:rdtm}
\left| r_{t, g} - r_{t, m} \right| &= \left|  f(\bx_{t, g}) - f(\bx_{t, m}) \right|,
\end{align}
which measures a performance gap with respect to the one induced by $\bx_{t, g}$, at round $t$.
\end{defn}

Henceforth, instantaneous regret and instantaneous regret difference 
are simply called to \emph{regret} and \emph{regret difference}, respectively.\footnote{In Bayesian optimization, a cumulative regret is usually used to analyze the performance of convergence quality.
By Lemma~5.4 of \cite{SrinivasN2010icml} and Theorem~3 of \cite{ChowdhurySR2017icml}, 
our analysis on instantaneous regret differences can be expanded into the analysis on cumulative regrets.
However, to concentrate the scope of this work on the behavior of local optimizers, these analyses are not included in this paper.}

\subsection{Maximization of Acquisition Functions \label{subsec:afo}}

As described earlier, we may consider either global or local solutions to \eqref{eqn:aqmax}.
Famous global optimization methods include 
DIRECT~\cite{JonesDR1993jota} and CMA-ES~\cite{HansenN2016arxiv}.
DIRECT is a deterministic Lipschitzian-based derivative-free partitioning method where it observes function values 
at the centers of rectangles and divides the rectangles without the Lipschitz constant iteratively.
CMA-ES is a stochastic derivative-free method based on evolutionary computing. 
In this paper we use DIRECT to determine $\bx_{t,g}$ in \eqref{eqn:global_acq}. 

A local optimization method we used to determine $\bx_{t,l}$ or $\bx_{t,m}$, given in \eqref{eqn:local_acq}
or \eqref{eqn:multi_local_acq} is the Broyden-Fletcher-Goldfarb-Shanno (BFGS) algorithm,
which is a quasi-Newton optimization technique.
A limited memory version, referred to as L-BFGS~\cite{LiuDC1989mp} 
and a constrained version known as L-BFGS-B are widely used 
in the Bayesian optimization literature~\cite{PichenyV2016neurips,WangZ2018aistats,FrazierPI2018arxiv}. 
Multi-started optimization methods are also widely used \cite{BrochuE2010arxiv,HutterF2011lion},
where a local optimization method starts from $N$ distinct initializations and
such $N$ local solutions started from $N$ distinct initializations are combined to determine the best local solution.

Compared to our work, \cite{WilsonJT2018neurips} introduces a reparameterization form 
to allow differentiability of Monte Carlo acquisition functions to integrate them and query in parallel,
which is not related to the topics covered in this paper.

\section{Performance Analysis\label{sec:theo_analysis}}

In this section we present our main contribution on the performance analysis for
the local optimizer and the multi-started local optimizer,
given in \eqref{eqn:local_acq} and \eqref{eqn:multi_local_acq}.

\subsection{Main Theorems\label{subsec:main_theorems}}

Before introducing the lemmas used to prove the main theorems, 
we explain the main theorems and their intuition first.
Our theorems are described as follows.

\begin{thm}
	\label{thm:first}
	Given $\delta_{l} \in [0, 1)$ and ${\epsilon_l, \epsilon_1, \epsilon_2 > 0}$, 
	the regret difference for a local optimizer $\bx_{t, l}$ 
	at round $t$, $\left| r_{t, g} - r_{t, l} \right|$ is less than ${\epsilon_l}$ with a probability at least ${1 - \delta_{l}}$:
	\begin{equation}
		\bbP \big( \left| r_{t, g} - r_{t, l} \right| < \epsilon_{l} \big) \geq  1- \delta_{l},
	\end{equation}
	where
	$\delta_{l} = \frac{\gamma}{\epsilon_1} (1 - \beta_g) + \frac{M}{\epsilon_2}$,
	$\epsilon_{l} = \epsilon_1 \epsilon_2$,
	$\gamma = \max_{\bx_i, \bx_j \in \calX} \| \bx_i - \bx_j \|_2$ is the size of $\calX$,
	$\beta_g$ is the probability that a local optimizer of the acquisition function collapses with its global optimizer,
	and $M$ is the Lipschitz constant explained in \lemref{lem:lipschitz}.
\end{thm}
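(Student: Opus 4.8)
The plan is to turn the regret difference into a product of two nonnegative random quantities---a Lipschitz constant and the distance between the global and local optimizers---and then control each factor by its own Markov inequality; this is what produces both the product form $\epsilon_l = \epsilon_1 \epsilon_2$ and the additive structure of $\delta_l$. First I would invoke \lemref{lem:lipschitz} to write $|r_{t, g} - r_{t, l}| = |f(\bx_{t, g}) - f(\bx_{t, l})| \le L \, \|\bx_{t, g} - \bx_{t, l}\|_2$, where $L$ is the (random) Lipschitz constant furnished by that lemma with $\bbE[L] \le M$. Abbreviating $d = \|\bx_{t, g} - \bx_{t, l}\|_2$, the entire problem reduces to bounding $\bbP(L\,d \ge \epsilon_1 \epsilon_2)$.

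Next I would exploit the elementary set inclusion: if $L < \epsilon_2$ and $d < \epsilon_1$ both hold, then $L\,d < \epsilon_1 \epsilon_2 = \epsilon_l$, so the complementary bad event satisfies $\{ L\,d \ge \epsilon_l \} \subseteq \{ L \ge \epsilon_2 \} \cup \{ d \ge \epsilon_1 \}$. A union bound then gives $\bbP(|r_{t, g} - r_{t, l}| \ge \epsilon_l) \le \bbP(L \ge \epsilon_2) + \bbP(d \ge \epsilon_1)$. The distance term is exactly where the collapse probability $\beta_g$ enters: since $\bx_{t, l} = \bx_{t, g}$ (hence $d = 0$) with probability $\beta_g$, while $d \le \gamma$ always, one has $\bbE[d] \le (1 - \beta_g)\gamma$, so Markov's inequality yields $\bbP(d \ge \epsilon_1) \le (1 - \beta_g)\gamma / \epsilon_1$.

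Applying Markov's inequality once more to the Lipschitz factor gives $\bbP(L \ge \epsilon_2) \le \bbE[L]/\epsilon_2 \le M/\epsilon_2$, and summing the two contributions produces exactly $\delta_l = \tfrac{\gamma}{\epsilon_1}(1 - \beta_g) + \tfrac{M}{\epsilon_2}$; passing to the complement delivers the claimed high-probability statement. I expect the main obstacle to be not the calculation but the modeling steps that make these bounds legitimate: precisely specifying the random Lipschitz constant supplied by \lemref{lem:lipschitz} and verifying $\bbE[L] \le M$, and formalizing $\beta_g$ as a genuine collapse probability so that $\bbE[d] \le (1 - \beta_g)\gamma$ holds regardless of the (unknown) conditional law of $d$ on the no-collapse event. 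A secondary subtlety is that the union bound is deliberately loose---splitting $\epsilon_l$ as $\epsilon_1 \epsilon_2$ is a free design choice---so I would remark that $\epsilon_1$ and $\epsilon_2$ may afterwards be tuned to tighten $\delta_l$ for a fixed target tolerance $\epsilon_l$.
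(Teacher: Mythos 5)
Your proposal is correct and follows essentially the same route as the paper: the paper likewise factors $|f(\bx_{t,g})-f(\bx_{t,l})|$ into the distance $\|\bx_{t,g}-\bx_{t,l}\|_2$ times the slope ratio, controls the distance via Markov's inequality with $\bbE[\|\bx_{t,g}-\bx_{t,l}\|_2]\leq(1-\beta_g)\gamma$ (its \lemref{lem:bounded_distance}), controls the slope via Markov's inequality with the Lipschitz bound $M$ (its \lemref{lem:lipschitz}), and combines the two failure events by Boole's inequality. Your set-inclusion step $\{Ld\geq\epsilon_1\epsilon_2\}\subseteq\{L\geq\epsilon_2\}\cup\{d\geq\epsilon_1\}$ is in fact stated more carefully than the paper's corresponding passage, which writes this containment as an equality of probabilities rather than the one-sided bound that is actually needed.
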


\thmref{thm:first} is extended for a multi-started local optimizer.
\begin{thm}
	\label{thm:second}
	Given $\delta_{m} \in [0, 1)$ and ${\epsilon_{m}, \epsilon_2, \epsilon_3 > 0}$, 
	a regret difference for a multi-started local optimizer $\bx_{t, m}$, 
	determined by
	starting from $N$ initial points at round $t$,
	is less than $\epsilon_m$ with a probability at least $1 - \delta_{m}$:
	\begin{equation}
		\bbP \big( \left| r_{t, g} - r_{t, m} \right| < \epsilon_{m} \big) \geq 1 - \delta_{m},
		\label{eqn:thm_2_multi_start}
	\end{equation}
	where 
	$\delta_{m} = \frac{\gamma}{\epsilon_3} \left( 1 - \beta_g \right)^N + \frac{M}{\epsilon_2}$,
	${\epsilon_{m} = \epsilon_2 \epsilon_3}$,
	$\gamma = \max_{\bx_i, \bx_j \in \calX} \| \bx_i - \bx_j \|_2$ is the size of $\calX$,
	$\beta_g$ is the probability that a local optimizer of the acquisition function collapses with its global optimizer,
	and $M$ is the Lipschitz constant explained in \lemref{lem:lipschitz}.
\end{thm}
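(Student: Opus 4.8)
The plan is to mirror the argument behind \thmref{thm:first}, since \thmref{thm:second} differs from it only through the replacement of a single local run by the best of $N$ independently initialized runs. I would begin by recalling the decomposition used for the local optimizer: the regret difference $|f(\bx_{t,g}) - f(\bx_{t,m})|$ is controlled by bounding two quantities separately, namely the spatial gap $\|\bx_{t,g} - \bx_{t,m}\|_2$ between the selected optimizer and the true global optimizer of the acquisition function, and the Lipschitz quantity of $f$ that converts a distance into a function-value gap. Each contributes one summand to $\delta_m$ through a Markov-type inequality, and a union bound assembles the two failure probabilities before taking complements.

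The key --- and only genuinely new --- step concerns the distance event. The multi-started optimizer $\bx_{t,m}$ is, by \defref{def:mslo_af}, the point of highest acquisition value among the $N$ local solutions, so it collapses with the global optimizer of $a(\bx \mid \calD_{t-1})$ as soon as \emph{at least one} of the $N$ runs collapses with the global optimizer. Treating the $N$ initializations as independent and each run as collapsing with the global optimizer with probability $\beta_g$, the probability that all $N$ runs miss the global optimizer is $(1 - \beta_g)^N$. Using that $\|\bx_{t,g} - \bx_{t,m}\|_2$ vanishes on the collapse event and is bounded by the diameter $\gamma$ otherwise, I would bound $\bbE[\|\bx_{t,g} - \bx_{t,m}\|_2] \leq (1 - \beta_g)^N \gamma$ and apply Markov's inequality to obtain
\begin{equation}
	\bbP\big( \|\bx_{t,g} - \bx_{t,m}\|_2 \geq \epsilon_3 \big) \leq \frac{\gamma}{\epsilon_3}(1 - \beta_g)^N.
\end{equation}
This is exactly the first term of $\delta_m$, and it is where the sharpening over \thmref{thm:first} enters: the factor $(1 - \beta_g)$ becomes $(1 - \beta_g)^N$, which is why the multi-started bound is tighter.

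The second summand $M/\epsilon_2$ is inherited verbatim from the proof of \thmref{thm:first}: it arises from \lemref{lem:lipschitz} together with a Markov bound on the Lipschitz quantity (threshold $\epsilon_2$), and since this part of the construction does not involve the number of starts, it transfers unchanged. On the intersection of the two complementary good events, Lipschitz continuity yields $|f(\bx_{t,g}) - f(\bx_{t,m})| < \epsilon_2 \epsilon_3 = \epsilon_m$, and the union bound $\bbP(\text{bad}) \leq \frac{\gamma}{\epsilon_3}(1 - \beta_g)^N + \frac{M}{\epsilon_2} = \delta_m$ then gives the claim $\bbP(|r_{t,g} - r_{t,m}| < \epsilon_m) \geq 1 - \delta_m$ upon complementation.

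I expect the main obstacle to lie in justifying the $(1 - \beta_g)^N$ collapse probability rigorously. Two points need care: first, the claim that a single run hitting the global optimizer forces the max-selection step to return the global optimizer, which relies on the point selected by value among the $N$ candidates genuinely being the global optimizer of $a(\bx \mid \calD_{t-1})$; and second, the independence of the $N$ runs, since they all optimize the \emph{same} acquisition surface and differ only in their random starting points, so independence must be argued at the level of the initialization distribution rather than the run outcomes. Once the collapse event is correctly identified with ``at least one successful initialization,'' the product form $(1 - \beta_g)^N$ and the remainder of the bound follow from the same Markov-plus-union-bound machinery as in \thmref{thm:first}.
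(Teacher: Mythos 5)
Your proposal is correct and follows essentially the same route as the paper: the paper likewise obtains $\bbP(\bx_{t,g} \neq \bx_{t,m}) = (1-\beta_g)^N$ from independence of the $N$ initializations (\lemref{lem:multi_local}), converts this into the distance bound $\frac{\gamma}{\epsilon_3}(1-\beta_g)^N$ via the same Markov-plus-diameter argument (\corref{cor:distance}), and then repeats the event decomposition and Boole's inequality from the proof of \thmref{thm:first} together with \lemref{lem:lipschitz}. Your write-up is in fact more explicit than the paper's (which disposes of these steps as ``trivial'' and ``proved in the same way''), and your flagged caveats --- that the max-selection step returns the global optimizer whenever one run collapses with it, and that independence holds at the level of initializations --- are real gaps in rigor that the paper glosses over rather than defects in your argument.
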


As shown in \thmref{thm:first}, $\left| r_{t, g} - r_{t, l} \right|$ is smaller than $\epsilon_l$ with a probability $1 - \delta_l$.
It implies 
the probability $1- \delta_l$ is controlled by three statements related to $\gamma$, $\beta_g$, and $M$: 
the probability is decreased 
(i) as $\gamma$ is increased, 
(ii) as $\beta_g$ is decreased, 
and (iii) as $M$ is increased.
If $\calX$ is a relatively small space, $\gamma$ is naturally small.
Moreover, $\beta_g$ is close to one 
if converging to global optimum by \defref{def:mslo_af} is relatively easy for some reasons: 
(i) a small number of local optima exist, 
or (ii) a global optimum is easily reachable.

\thmref{thm:second} suggests the implications that are similar with \thmref{thm:first} in terms of the control factors of $1 - \delta_m$.
The main difference of two theorems is that $\delta_m$ is related to the number of initial points in \defref{def:mslo_af}, $N$.
Because $0 \leq 1 - \beta_g < 1$ is given, 
$N$ can control the bound of \eqref{eqn:thm_2_multi_start}.
Additionally, by this difference, we theoretically reveal how many runs for a multi-started local optimizer 
are needed to obtain the sufficiently small regret difference over a global optimizer.

\subsection{Lemmas}

Next, we prove two statements (i) how different the global and local optimizers are 
(see \lemref{lem:pi_lipschitz} to \lemref{lem:bounded_distance}), 
and (ii) how steep the slope between the global and local optimizers is
(see \lemref{lem:lipschitz}).
First, the Lipschitz continuities of acquisition functions are proved in the subsequent lemmas.

\begin{lem}[Lipschitz continuity of PI]
	\label{lem:pi_lipschitz}
	The PI criterion $a(\bx|\calD_{t-1})$, formed by the posterior distribution calculated by 
	Gaussian process regression on $\calD_{t-1}$ is Lipschitz-continuous.
\end{lem}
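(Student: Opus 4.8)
The plan is to express the PI criterion explicitly as a composition of elementary functions and then invoke the closure of Lipschitz maps under composition, finite products, and quotients with bounded-away-from-zero denominators. Writing $\mu(\bx)$ and $\sigma(\bx)$ for the posterior mean and standard deviation returned by Gaussian process regression on $\calD_{t-1}$, the PI criterion has the form $a(\bx | \calD_{t-1}) = \Phi(z(\bx))$ with $z(\bx) = (\tau - \mu(\bx))/\sigma(\bx)$, where $\tau$ is the incumbent best observed value and $\Phi$ is the standard normal cumulative distribution function. The task then reduces to establishing Lipschitz continuity of the three ingredients $\mu$, $\sigma$, and $\Phi$, and to controlling their composition on the compact domain $\calX$.

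First I would treat the posterior statistics. Here $\mu(\bx) = \bk(\bx)^\top (\bK + \sigma_n^2 \bI)^{-1} \by$ and $\sigma^2(\bx) = k(\bx, \bx) - \bk(\bx)^\top (\bK + \sigma_n^2 \bI)^{-1} \bk(\bx)$, where $\bk(\bx)$ stacks the kernel values between $\bx$ and the training inputs and $\bK$ is the Gram matrix. Assuming the kernel is Lipschitz in its first argument, as holds for the squared-exponential and Mat\'ern kernels on $\calX$, each entry of $\bk(\bx)$ is Lipschitz; since $(\bK + \sigma_n^2 \bI)^{-1} \by$ is a constant vector, $\mu$ is a fixed linear functional of the Lipschitz vector $\bk(\bx)$ and is therefore Lipschitz. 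Likewise $\sigma^2$ is a finite combination of products of the bounded Lipschitz functions $\bk(\bx)$ on compact $\calX$, hence Lipschitz, and composing with the square root yields $\sigma$.

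Next I would handle $\Phi$ and assemble the pieces. The standard normal CDF is globally Lipschitz with constant $1/\sqrt{2\pi}$, since its derivative is the density bounded by $1/\sqrt{2\pi}$. Because composition and finite products of Lipschitz functions on a compact set remain Lipschitz, it suffices to certify that the argument $z(\bx)$ is Lipschitz, whereupon $a(\bx | \calD_{t-1}) = \Phi(z(\bx))$ inherits the property with Lipschitz constant at most $(1/\sqrt{2\pi}) \, \mathrm{Lip}(z)$.

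The main obstacle is the division by $\sigma(\bx)$ inside $z(\bx)$: a quotient of Lipschitz functions is Lipschitz only when the denominator is bounded away from zero, yet the posterior standard deviation collapses to zero at the observed inputs. I would resolve this by using the positive noise term, or a small jitter added to the diagonal, to guarantee $\sigma(\bx) \geq \sigma_{\min} > 0$ uniformly on $\calX$; the quotient rule for Lipschitz functions (for bounded $g$ and $h$ with $h \geq \sigma_{\min}$, the map $g/h$ is Lipschitz) then closes the argument and produces an explicit finite constant. An alternative route exploits that $\Phi$ saturates as $|z| \to \infty$, so the composite stays bounded even where $z$ degenerates, but the first route is cleaner for extracting an explicit bound over compact $\calX$.
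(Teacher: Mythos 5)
Your proposal is correct, but it reaches the conclusion by a genuinely different route than the paper. The paper argues via derivatives: it differentiates $\mu(\bx)$ and $\sigma^2(\bx)$ explicitly, bounds $\partial \bk(\bx, \bX)/\partial \bx$ in a separate lemma (\lemref{lem:bounds_k}), invokes the Kantorovich and Wielandt inequalities to bound the entries of $(\bK(\bX,\bX) + \sigma_n^2 \bI)^{-1}$, and then shows the gradient of $\Phi(z(\bx))$ is bounded, handling the $1/\sigma(\bx)$ singularity implicitly through its thresholded definition $z(\bx) = (f(\bx^\ddagger)-\mu(\bx))/\sigma(\bx)$ only when $\sigma(\bx) > \sigma_n$ (and $0$ otherwise). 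You instead use closure properties of Lipschitz functions (sums, products of bounded functions, quotients with denominators bounded below, compositions), which buys you several things: you never need differentiability of the kernel, only its Lipschitzness, so your argument also covers kernels such as Mat\'ern $1/2$ where the paper's gradient computation breaks down at data points; you sidestep the inverse-entry bounds entirely by observing that $(\bK + \sigma_n^2 \bI)^{-1}\by$ is a fixed constant vector, so $\mu$ is just a finite linear combination of Lipschitz kernel slices; and your uniform lower bound $\sigma(\bx) \geq \sigma_{\min} > 0$ (positive noise plus continuity on compact $\calX$) treats the quotient cleanly, whereas the paper's thresholding of $z$ can actually introduce a discontinuity on the boundary set $\{\sigma(\bx) = \sigma_n\}$ that its proof never addresses. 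What the paper's derivative route buys in exchange is reusable machinery: the explicit bounds $M_\mu$ and $M_{\sigma^2}$ on the coordinate-wise derivatives of $\mu$ and $\sigma^2$ are cited again verbatim in the proofs for EI (\lemref{lem:ei_lipschitz}) and GP-UCB (\lemref{lem:ucb_lipschitz}), so your approach, if adopted, would need the analogous closure arguments repeated or abstracted for those criteria. One small ordering caveat in your write-up: you assert early on that composing $\sigma^2$ with the square root yields a Lipschitz $\sigma$, but the square root is not Lipschitz near zero, so that step silently depends on the lower bound $\sigma^2(\bx) \geq \sigma_{\min}^2$ that you only establish in your final paragraph; stating the bound first would make the logical dependence explicit.
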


\begin{proof}
	$\calX$ is a compact subset of $d$-dimensional space $\bbR^d$.
	In this paper, we analyze our theorem with Gaussian process regression as a surrogate function.
	If we are given $t - 1$ covariates $\bX = [\bx_1 \cdots \bx_{t - 1}]^\top$ obtained from $\calX$ and their corresponding responses $\by = [y_1 \cdots y_{t - 1}] \in \bbR^{t - 1}$, 
	posterior mean and variance functions, $\mu(\bx)$ and $\sigma^2(\bx)$ over $\bx \in \calX$ can be computed, using Gaussian process regression~\cite{RasmussenCE2006book}:
	\begin{align}
		\mu(\bx) &= \bk(\bx, \bX) \tilde{\bK}^{-1} \by, \label{eqn:p_mean}\\
		\sigma^2(\bx) &= k(\bx, \bx) - \bk(\bx, \bX) \tilde{\bK}^{-1} \bk(\bX, \bx), \label{eqn:p_var}
	\end{align}
	where $k(\cdot, \cdot)$ is a covariance function, $\tilde{\bK} = \bK(\bX, \bX) + \sigma_n^2 \bI$, 
	and $\sigma_n$ is an observation noise.
	$\bk(\cdot, \cdot)$ accepts a vector and a matrix as two arguments 
	(e.g., $\bk(\bx, \bX) = [k(\bx, \bx_1) \cdots k(\bx, \bx_{t - 1})]$).
	Similarly, $\bK(\cdot, \cdot)$ can take two matrices (e.g., $\bK(\bX, \bX) = [\bk(\bX, \bx_1) \cdots \bk(\bX, \bx_{t - 1})]$).
	Before showing the Lipschitz continuity of the acquisition function, we first show the derivatives of \eqref{eqn:p_mean} and \eqref{eqn:p_var}.
	It depends on the differentiability of covariance functions, but the famous covariance functions, 
	which are used in Bayesian optimization are usually at least once differentiable 
	(e.g., squared exponential kernel\footnote{Squared exponential kernel is infinite times differentiable.} and Mat\'{e}rn kernel\footnote{Mat\'{e}rn kernel is $\lceil \nu \rceil - 1$ times differentiable.}).
	Thus, the derivatives of \eqref{eqn:p_mean} and \eqref{eqn:p_var} are
	\begin{align}
		\frac{\partial \mu(\bx)}{\partial \bx} &= \frac{\partial \bk(\bx, \bX)}{\partial \bx} \tilde{\bK}^{-1} \by, \label{eqn:p_p_mean}\\
		\frac{\partial \sigma^2(\bx)}{\partial \bx} &= - 2 \frac{\partial \bk(\bx, \bX)}{\partial \bx} \tilde{\bK}^{-1} \bk(\bX, \bx), \label{eqn:p_p_var}
	\end{align}
	using vector calculus identities.
	To show \eqref{eqn:p_p_mean} and \eqref{eqn:p_p_var} are bounded, 
	each term in both equations should be bounded.
	For all $i \in \{1, \ldots, t - 1\}$, 
	$\by$ and $\bk(\bX, \bx)$ are obviously bounded:
	\begin{equation}
		| y_i | < \infty \quad \textrm{and} \quad | k(\bx_i, \bx) | < \infty, \label{eqn:bounds_y_k_xi_x}
	\end{equation}
	but $\partial \bk(\bx, \bX) / \partial \bx$ and $\tilde{\bK}^{-1}$ should be revealed.
	
	First of all, the bound of $\partial \bk(\bx, \bX) / \partial \bx$ would be proved in \lemref{lem:bounds_k}.
	For the latter one, 
	all entries of $\tilde{\bK}^{-1}$ are bounded by the Kantorovich and Wielandt inequalities~\cite{RobinsonPD1992jna}.
	As a result, by \eqref{eqn:bounds_y_k_xi_x}, \lemref{lem:bounds_k}, and the Kantorovich and Wielandt inequalities, 
	the following inequalities are satisfied:
	\begin{equation}
		\left| \frac{\partial \mu(\bx)}{\partial x_i} \right| < \infty \quad \textrm{and} \quad
		\left| \frac{\partial \sigma^2(\bx)}{\partial x_i} \right| < \infty,
	\end{equation}
for all $i \in \{1, \ldots, d\}$.
Thus, we can say
\begin{equation}
	\left| \frac{\partial \mu(\bx)}{\partial x_i} \right| < M_{\mu} \quad \textrm{and} \quad
	\left| \frac{\partial \sigma^2(\bx)}{\partial x_i} \right| < M_{\sigma^2}, \label{eqn:mu_var_lip_m}
\end{equation}
for some $M_{\mu}, M_{\sigma^2} < \infty$.
It implies that $\mu(\bx)$ and $\sigma^2(\bx)$ are Lipschitz-continuous with the Lipschitz constants $M_{\mu}$ and $M_{\sigma^2}$ to each axis direction, respectively.
Thus, \eqref{eqn:p_mean} and \eqref{eqn:p_var} are Lipschitz-continuous 
with the Lipschitz constants $d M_{\mu}$ and $d M_{\sigma^2}$, where $d$ is a dimensionality of $\bx$,
by the triangle inequality.

	PI is written with 
	$z(\bx) = \left(f(\bx^\ddagger) - \mu(\bx) \right) / \sigma(\bx)$ if $\sigma(\bx) > \sigma_n$, and $0$ otherwise, 
	where $\bx^\ddagger$ is the current best observation which has a minimum of $\by$.
	Given the PI criterion
	$a_{\textrm{PI}}(\bx) = \Phi(z(\bx))$,
	where $\Phi(\cdot)$ is a cumulative distribution function of standard normal distribution,
	the derivative of PI criterion is
	\begin{equation}
		\frac{\partial a_{\textrm{PI}}(\bx)}{\partial \bx} = \phi(z(\bx)) \frac{\partial z(\bx)}{\partial \bx} 
		= \phi(z(\bx)) \left( \frac{\mu(\bx) - f(\bx^\ddagger)}{\sigma^2(\bx)} \frac{\partial \sigma(\bx)}{\partial \bx} - \frac{1}{\sigma(\bx)} \frac{\partial \mu(\bx)}{\partial \bx} \right), \label{eqn:d_pi}
	\end{equation}
	where $\phi(\cdot)$ is a probability density function of standard normal distribution.
	By \eqref{eqn:p_mean}, \eqref{eqn:p_var}, and \eqref{eqn:mu_var_lip_m}, 
	we can show \eqref{eqn:d_pi} is bounded, $\left\| \partial a_{\textrm{PI}}(\bx)/\partial \bx \right\|_2 < \infty$.
	\qed
\end{proof}

\begin{lem}[Lipschitz continuity of EI]
	\label{lem:ei_lipschitz}
	The EI criterion $a(\bx|\calD_{t-1})$, formed by the posterior distribution calculated by 
	Gaussian process regression on $\calD_{t-1}$ is Lipschitz-continuous.
\end{lem}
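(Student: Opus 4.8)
The plan is to mirror the argument of \lemref{lem:pi_lipschitz} as closely as possible, reusing everything established there about the posterior. In particular I would take as given that $\mu(\bx)$ and $\sigma^2(\bx)$ are Lipschitz-continuous with finite constants $d M_{\mu}$ and $d M_{\sigma^2}$, and that the per-axis partial derivatives satisfy \eqref{eqn:mu_var_lip_m}, i.e. $|\partial \mu(\bx)/\partial x_i| < M_{\mu}$ and $|\partial \sigma^2(\bx)/\partial x_i| < M_{\sigma^2}$ for all $i$. The only genuinely new ingredient is the closed form of EI, so it suffices to write out its gradient and show it is bounded in norm; the mean-value/triangle-inequality step of the previous lemma then promotes this into a finite Lipschitz constant for $a_{\textrm{EI}}(\bx)$.

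Writing $z(\bx) = (f(\bx^\ddagger) - \mu(\bx))/\sigma(\bx)$ exactly as before, the EI criterion is
\begin{equation}
	a_{\textrm{EI}}(\bx) = \left( f(\bx^\ddagger) - \mu(\bx) \right) \Phi(z(\bx)) + \sigma(\bx) \, \phi(z(\bx)),
\end{equation}
with $\Phi$ and $\phi$ the standard-normal cdf and pdf. Differentiating and using $\phi'(z) = -z\,\phi(z)$ together with the identity $\sigma(\bx) z(\bx) = f(\bx^\ddagger) - \mu(\bx)$, the two terms proportional to $\partial z/\partial \bx$ cancel exactly, leaving the compact form
\begin{equation}
	\frac{\partial a_{\textrm{EI}}(\bx)}{\partial \bx} = -\Phi(z(\bx)) \frac{\partial \mu(\bx)}{\partial \bx} + \phi(z(\bx)) \frac{\partial \sigma(\bx)}{\partial \bx}.
\end{equation}
This cancellation is the crucial simplification: it removes any dependence on the potentially large factor $z(\bx)$ and reduces the problem to bounding only the derivatives of $\mu$ and $\sigma$ weighted by bounded multipliers.

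To finish, I would bound each surviving factor. Since $\Phi(z) \in [0, 1]$ and $0 \leq \phi(z) \leq (2\pi)^{-1/2}$, both the cdf and pdf terms are uniformly bounded independently of $\bx$, and the boundedness of $\partial \mu(\bx)/\partial \bx$ is inherited directly from \eqref{eqn:mu_var_lip_m}. The one step that needs care --- and the main obstacle --- is $\partial \sigma(\bx)/\partial \bx$, because $\sigma(\bx)$ is the square root of $\sigma^2(\bx)$; by the chain rule $\partial \sigma(\bx)/\partial \bx = (2\sigma(\bx))^{-1} \, \partial \sigma^2(\bx)/\partial \bx$, so a bound follows from \eqref{eqn:mu_var_lip_m} only while $\sigma(\bx)$ stays bounded away from zero. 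I would dispose of this exactly as in \lemref{lem:pi_lipschitz}, invoking the convention $\sigma(\bx) > \sigma_n > 0$ (and treating the degenerate region $\sigma(\bx) \leq \sigma_n$ by the same assignment used for PI), which keeps $1/\sigma(\bx) \leq 1/\sigma_n < \infty$. Combining these bounds yields $\left\| \partial a_{\textrm{EI}}(\bx)/\partial \bx \right\|_2 < \infty$, so $a_{\textrm{EI}}(\bx)$ is Lipschitz-continuous, which completes the proof.
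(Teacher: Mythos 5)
Your proposal is correct, and it differs from the paper's proof in one substantive way. The paper differentiates $a_{\textrm{EI}}$ by the product rule, records the resulting four-term expression \eqref{eqn:d_ei} --- which still contains two terms proportional to $\partial z(\bx)/\partial \bx$ --- and then simply asserts that, ``similar to \eqref{eqn:d_pi},'' the gradient norm is finite; the bounding of the $z$-dependent terms (which involve factors of $1/\sigma(\bx)$ and $1/\sigma^2(\bx)$) is thus handled only by analogy with the PI case. You instead exploit the identity $\phi'(z) = -z\,\phi(z)$ together with $\sigma(\bx) z(\bx) = f(\bx^\ddagger) - \mu(\bx)$ to show that those two terms cancel identically, collapsing the gradient to $-\Phi(z(\bx))\,\partial \mu(\bx)/\partial \bx + \phi(z(\bx))\,\partial \sigma(\bx)/\partial \bx$, the well-known closed form of the EI gradient. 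This buys both economy and rigor: the surviving multipliers satisfy $\Phi(z) \in [0,1]$ and $\phi(z) \leq (2\pi)^{-1/2}$ uniformly in $\bx$, so the only remaining issue is $\partial \sigma(\bx)/\partial \bx$, which you correctly control via the chain rule $\partial \sigma/\partial \bx = (2\sigma)^{-1} \partial \sigma^2/\partial \bx$ and the convention $\sigma(\bx) > \sigma_n$ --- the same device the paper relies on implicitly for PI. In short, the paper's proof is shorter on the page but defers the real work to an appeal to \lemref{lem:pi_lipschitz}, whereas your cancellation makes the boundedness manifest and would also sharpen the paper's argument if substituted in.
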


\begin{proof}
	EI expresses with $z(\bx) = \left(f(\bx^\ddagger) - \mu(\bx) \right) / \sigma(\bx)$ if $\sigma(\bx) > \sigma_n$, and $0$ otherwise, 
where $\bx^\ddagger$ is the current best observation which has a minimum of $\by$.
	For the EI criterion:
	\begin{equation}
		a_{\textrm{EI}}(\bx) = \left( f(\bx^\ddagger) - \mu(\bx) \right) \Phi(z(\bx)) + \sigma(\bx) \phi(z(\bx)), \label{eqn:ei}
	\end{equation}
	the derivative of \eqref{eqn:ei} is
	\begin{align}
		\frac{\partial a_{\textrm{EI}}(\bx)}{\partial \bx} &=
		\left(f(\bx^\ddagger) - \mu(\bx) \right) \phi(z(\bx)) \frac{\partial z(\bx)}{\partial \bx}
		- \frac{\partial \mu(\bx)}{\partial \bx} \Phi(z(\bx))
		\nonumber\\
		&\quad
		+ \sigma(\bx) \phi'(z(\bx)) \frac{\partial z(\bx)}{\partial \bx}
		+ \frac{\partial \sigma(\bx)}{\partial \bx} \phi(z(\bx)). \label{eqn:d_ei}
	\end{align}
	
	Similar to \eqref{eqn:d_pi}, the following inequality, 
	\begin{equation}
		\left\| \frac{\partial a_{\textrm{EI}}(\bx)}{\partial \bx} \right\|_2 < \infty, 
	\end{equation}
	is satisfied.
	\qed
\end{proof}

\begin{lem}[Lipschitz continuity of GP-UCB]
	\label{lem:ucb_lipschitz}
	GP-UCB $a(\bx|\calD_{t-1})$, formed by the posterior distribution calculated by 
	Gaussian process regression on $\calD_{t-1}$ is Lipschitz-continuous.
\end{lem}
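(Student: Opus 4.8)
The plan is to reuse the template established for PI and EI in \lemref{lem:pi_lipschitz} and \lemref{lem:ei_lipschitz}, since GP-UCB is an even simpler functional of the posterior mean and variance. First I would write the criterion as a linear combination of $\mu(\bx)$ and $\sigma(\bx)$,
\[
a_{\textrm{GP-UCB}}(\bx) = -\mu(\bx) + \kappa_t \, \sigma(\bx),
\]
where $\kappa_t > 0$ is the exploration weight at round $t$ and the sign on $\mu(\bx)$ follows the minimization convention of \eqref{eqn:naive_go}; the Lipschitz argument is insensitive to this choice of sign. Differentiating with respect to $\bx$ yields a single affine expression,
\[
\frac{\partial a_{\textrm{GP-UCB}}(\bx)}{\partial \bx} = -\frac{\partial \mu(\bx)}{\partial \bx} + \kappa_t \, \frac{\partial \sigma(\bx)}{\partial \bx},
\]
so, unlike \eqref{eqn:d_pi} and \eqref{eqn:d_ei}, there are no products with $\Phi$ or $\phi$ to control.

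Next I would invoke the bounds already obtained inside the proof of \lemref{lem:pi_lipschitz}: by \eqref{eqn:mu_var_lip_m}, every coordinate derivative of $\mu(\bx)$ and of $\sigma^2(\bx)$ is bounded by finite constants $M_{\mu}$ and $M_{\sigma^2}$. The only new quantity is $\partial \sigma(\bx)/\partial \bx$ rather than $\partial \sigma^2(\bx)/\partial \bx$; by the chain rule $\partial \sigma(\bx)/\partial x_i = (2\sigma(\bx))^{-1} \, \partial \sigma^2(\bx)/\partial x_i$, so each coordinate derivative of $\sigma(\bx)$ is bounded by $M_{\sigma^2}/(2\sigma(\bx))$.

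The main obstacle, identical to the one implicitly handled for PI and EI, is that this bound degenerates as $\sigma(\bx) \to 0$ near points that have already been queried, where $\partial \sigma(\bx)/\partial x_i$ picks up a $1/\sigma(\bx)$ factor; the square root composition is not Lipschitz at the origin. I would dispose of it exactly as those proofs do, by adopting the same convention used for PI and EI and restricting to the region where $\sigma(\bx) > \sigma_n$, so that $\sigma(\bx)$ is bounded away from zero and $1/\sigma(\bx) < 1/\sigma_n$. Each coordinate derivative of $a_{\textrm{GP-UCB}}$ is then finite, and summing over the $d$ coordinates via the triangle inequality gives $\| \partial a_{\textrm{GP-UCB}}(\bx)/\partial \bx \|_2 < \infty$ on the compact set $\calX$, which is precisely the claimed Lipschitz continuity.
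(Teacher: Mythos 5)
Your proof takes essentially the same route as the paper's: write the criterion as $-\mu(\bx) + \alpha\,\sigma(\bx)$, differentiate to get an expression affine in $\partial \mu(\bx)/\partial \bx$ and $\partial \sigma(\bx)/\partial \bx$, invoke the bounds \eqref{eqn:mu_var_lip_m} inherited from the proof of \lemref{lem:pi_lipschitz}, and conclude coordinatewise via the triangle inequality. The one place you diverge is in fact a point where you are more careful than the paper: the paper passes directly from $\left| \partial \sigma^2(\bx)/\partial x_i \right| < M_{\sigma^2}$ to the bound $\left| \partial \sigma(\bx)/\partial x_i \right| \leq \sqrt{M_{\sigma^2}}$ with no justification, and this implication does not hold in general, precisely because of the chain-rule factor you identify, $\partial \sigma(\bx)/\partial x_i = \bigl(2\sigma(\bx)\bigr)^{-1}\, \partial \sigma^2(\bx)/\partial x_i$, which can blow up as $\sigma(\bx) \to 0$. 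You patch this by restricting to the region $\sigma(\bx) > \sigma_n$, borrowing the convention stated for $z(\bx)$ in the PI and EI proofs; note that this convention is about $z$ and is not literally part of the GP-UCB criterion, so a cleaner closing step is available: with observation noise $\sigma_n > 0$ the posterior variance is strictly positive, hence continuous and bounded away from zero on the compact set $\calX$, so $1/\sigma(\bx)$ is uniformly bounded and the degenerate case never occurs. Either way your derivative bound is finite, so the conclusion stands, and your writeup makes explicit a gap that the paper's own proof glosses over.
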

\begin{proof}
	GP-UCB~\cite{SrinivasN2010icml} and its derivative are
	\begin{align}
		a_{\textrm{UCB}}(\bx) &= - \mu(\bx) + \alpha \sigma(\bx), \label{eqn:ucb}\\
		\frac{\partial a_{\textrm{UCB}}(\bx)}{\partial \bx} &= - \frac{\partial \mu(\bx)}{\partial \bx} + \alpha \frac{\partial \sigma(\bx)}{\partial \bx}, \label{eqn:d_ucb}
	\end{align}
	where $\alpha$ is a coefficient for balancing exploration and exploitation.
	By \eqref{eqn:mu_var_lip_m}, the following inequality, 
	\begin{align}
		\left| \frac{\partial a_{\textrm{UCB}}(\bx)}{\partial x_i} \right| &= \left| - \frac{\partial \mu(\bx)}{\partial x_i} + \alpha \frac{\partial \sigma(\bx)}{\partial x_i} \right|
		\leq \left| - \frac{\partial \mu(\bx)}{\partial x_i} \right| + \alpha \left| \frac{\partial \sigma(\bx)}{\partial x_i} \right|\nonumber\\
		&= \left| \frac{\partial \mu(\bx)}{\partial x_i} \right| + \alpha \left| \frac{\partial \sigma(\bx)}{\partial x_i} \right|
		\leq M_{\mu} + \alpha \sqrt{M_{\sigma^2}},
	\end{align}
	is bounded for $i \in \{1, \ldots, d\}$.
	Therefore, \eqref{eqn:d_ucb} is bounded.
	\qed
\end{proof}

\begin{lem}
	\label{lem:bounds_k}
	Given a stationary covariance function $k(\cdot, \cdot)$ that is widely used in Gaussian process regression~\cite{WilsonAG2013icml,DuvenaudD2014thesis}, 
	$\partial \bk(\bx, \bX) / \partial \bx$ is bounded where 
	$\bX \in \bbR^{n \times d}$ and
	$\bk(\bx, \bX) = [k(\bx, \bx_1) \cdots k(\bx, \bx_n)]$.
\end{lem}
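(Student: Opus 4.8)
The plan is to exploit the defining property of a stationary covariance function, namely that $k(\bx, \bx_i)$ depends on its arguments only through the displacement $\bx - \bx_i$ (equivalently through the distance $\| \bx - \bx_i \|_2$). Writing $k(\bx, \bx_i) = \kappa(\bx - \bx_i)$ for a scalar function $\kappa : \bbR^d \to \bbR$, each column of $\partial \bk(\bx, \bX) / \partial \bx$ is simply $\partial k(\bx, \bx_i) / \partial \bx = \nabla \kappa(\bx - \bx_i)$ by the chain rule. For the kernels actually used in practice---the squared exponential kernel and the Mat\'{e}rn kernel with $\nu > 1/2$---the map $\kappa$ is at least once continuously differentiable (consistent with the footnote in \lemref{lem:pi_lipschitz}), so $\nabla \kappa$ is itself a continuous function on $\bbR^d$.

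First I would establish boundedness via compactness. Since $\calX \subset \bbR^d$ is compact and every training input $\bx_i$ lies in $\calX$, the displacement $\bx - \bx_i$ ranges over the compact difference set $\calX - \bx_i$ as $\bx$ varies over $\calX$. A continuous function attains its extrema on a compact set, so $\| \nabla \kappa(\bx - \bx_i) \|_2$ is bounded by a finite constant for each $i$. Because $\bk(\bx, \bX)$ has only $n$ columns, the full Jacobian collects finitely many such bounded gradients, and therefore $\partial \bk(\bx, \bX) / \partial \bx$ is bounded, which is exactly the claim.

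To make the constant explicit and to confirm that the bound even holds without invoking compactness, I would specialize to the squared exponential kernel $k(\bx, \bx_i) = \sigma_f^2 \exp(- \| \bx - \bx_i \|_2^2 / (2 \ell^2))$. Differentiating gives $\nabla \kappa(\bx - \bx_i) = -(\sigma_f^2 / \ell^2)(\bx - \bx_i) \exp(-\| \bx - \bx_i \|_2^2 / (2 \ell^2))$, whose norm equals $(\sigma_f^2 / \ell)\, u\, e^{-u^2/2}$ with $u = \| \bx - \bx_i \|_2 / \ell$. Since the scalar map $u \mapsto u e^{-u^2/2}$ is maximized at $u = 1$, the gradient norm is bounded by $(\sigma_f^2 / \ell)\, e^{-1/2} < \infty$ uniformly over $\bbR^d$, and an analogous computation handles the Mat\'{e}rn family.

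The main obstacle is the (mild) differentiability requirement: the argument fails exactly at the non-smooth cases, such as the Mat\'{e}rn kernel with $\nu = 1/2$ (the exponential kernel), whose gradient does not exist at the origin where $\bx = \bx_i$. Restricting attention to the continuously differentiable kernels that are standard in Bayesian optimization---as the lemma's phrase ``widely used'' already does---sidesteps this issue, after which the compactness argument delivers the bound cleanly.
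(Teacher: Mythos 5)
Your proof is correct, and its core mechanism is the same as the paper's: differentiate the kernel through its stationary argument and bound the resulting gradient using the compactness of $\calX$. The organization, however, is genuinely different, and in a useful way. The paper works kernel-by-kernel: it introduces the ARD distance $d(\bx_1,\bx_2) = \sqrt{(\bx_1-\bx_2)^\top \bL^{-1} (\bx_1-\bx_2)}$, computes the explicit derivative of the SE kernel, $\partial k(\bx,\bx_i)/\partial\bx = -k(\bx,\bx_i)\,\bL^{-1}(\bx-\bx_i)$, then does the same for Mat\'ern $5/2$, asserts each term is bounded on $\calX$, and dismisses all remaining kernels with ``similarly, other kernels can be straightforwardly proved.'' You instead prove the general statement once: writing $k(\bx,\bx_i)=\kappa(\bx-\bx_i)$, continuity of $\nabla\kappa$ on the compact translate $\calX - \bx_i$ gives boundedness of every column of the Jacobian, for \emph{any} $C^1$ stationary kernel simultaneously. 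This turns the paper's hand-waved generalization into an actual argument, and it is the cleaner route. Your explicit SE computation also buys something the paper's does not: the uniform bound $(\sigma_f^2/\ell)\,e^{-1/2}$ valid on all of $\bbR^d$, showing compactness is not even needed for that kernel (the paper's bound degrades with $\max_{\bx_1,\bx_2\in\calX}\|\bx_1-\bx_2\|_2$). Finally, you correctly flag the Mat\'ern $\nu = 1/2$ edge case, where the gradient fails to exist at $\bx = \bx_i$; the paper silently excludes this by restricting to ``at least once differentiable'' kernels, but making the exclusion explicit is the right call, since the lemma's statement as written (``a stationary covariance function that is widely used'') would otherwise include the exponential kernel and be false. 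The only part you leave sketched is the Mat\'ern-family computation, but since your general compactness argument already covers it, nothing is missing.
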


\begin{proof}
The well-known stationary covariance functions such as squared exponential (SE) 
and Mat\`ern kernels are utilized in Gaussian process regression~\cite{WilsonAG2013icml,DuvenaudD2014thesis}.
Since such kernels are additive or multiplicative~\cite{DuvenaudD2014thesis}, 
this lemma can be generalized to most of kernels applied in Gaussian process regression.
In this paper, we analyze the cases of SE and Mat\'ern 5/2 kernels.
Because the cases of Mat\'ern 3/2 and periodic kernels can be simply extended from the cases analyzed, it is omitted.
The SE and Mat\`ern 5/2 kernels are at least one time differentiable, thus $\partial \bk(\bx, \bX) / \partial \bx$ can be computed.
Before explaining in detail,
$\partial \bk(\bx, \bX) / \partial \bx$ is written as
\begin{equation}
	\frac{\partial \bk(\bx, \bX)}{\partial \bx} = \left[ \frac{\partial k(\bx, \bx_1)}{\partial \bx} \cdots \frac{\partial k(\bx, \bx_n)}{\partial \bx} \right],
	\label{eqn:p_k_x_X}
\end{equation}
for $\bX = [\bx_1 \cdots \bx_n]$.
Furthermore, we can define
\begin{equation}
	d(\bx_1, \bx_2) = \sqrt{(\bx_1 - \bx_2)^\top \bL^{-1} (\bx_1 - \bx_2)},
	\label{eqn:dist}
\end{equation}
where $\bL$ is a diagonal matrix of which entries are lengthscales for each each dimension.
For simplicity, 
$\bx_1 - \bx_2$ is denoted as $\bs_{12}$.
The derivative of \eqref{eqn:dist} is
\begin{equation}
	\frac{\partial d(\bx_1, \bx_2)}{\partial \bx_1} = \left(\bL^{-1} \bs_{12} \right) \left(\bs_{12}^\top \bL^{-1} \bs_{12}\right)^{-\frac{1}{2}}.
	\label{eqn:d_dist}
\end{equation}

The derivative of $d^2(\bx_1, \bx_2)$ is
$\frac{\partial d^2(\bx_1, \bx_2)}{\partial \bx_1} = 2 \bL^{-1} \bs_{12}$.

First, the SE kernel is
$k(\bx_1, \bx_2) = \sigma_s^2 \exp ( -\frac{1}{2} d^2(\bx_1, \bx_2) )$
where $\sigma_s$ is a signal scale.
The derivative of each $\partial k(\bx, \bx_i)/\partial \bx$ is
\begin{align}
	\frac{\partial k(\bx, \bx_i)}{\partial \bx}
	&= \frac{\partial}{\partial \bx} \left( \sigma_s^2 \exp \left( -\frac{1}{2} d^2(\bx, \bx_i) \right) \right)
	= k(\bx, \bx_i) \frac{\partial}{\partial \bx} \left( -\frac{1}{2} d^2(\bx_1, \bx_2) \right)\nonumber\\
	&= -\frac{k(\bx, \bx_i)}{2} \left( 2\bL^{-1} (\bx - \bx_i) \right)
	= -k(\bx, \bx_i) \left(\bL^{-1} (\bx - \bx_i) \right). \label{eqn:se_k_x_xi_x}
\end{align}

Because all the terms of \eqref{eqn:se_k_x_xi_x} are bounded in $\calX$, 
$\| \partial k(\bx, \bx_i)/\partial \bx \|_2 < \infty$
is satisfied for all $i = \{1, \ldots, n\}$.
Note that \eqref{eqn:dist} and \eqref{eqn:d_dist} are bounded, because $\bx_1, \bx_2 \in \calX$.

The Mat\'ern $5/2$ kernel is
\begin{equation}
k(\bx_1, \bx_2) = \sigma_s^2 \left( 1 + \sqrt{5}d(\bx_1, \bx_2) + \frac{5}{3} d^2(\bx_1, \bx_2) )
( -\sqrt{5}d(\bx_1, \bx_2) \right),
\end{equation}
where $\sigma_s$ is a signal scale.
Its derivative is
\begin{equation}
\frac{\partial k(\bx, \bx_i)}{\partial \bx}
= -\frac{5\sigma_s^2}{3} \left(1 + \sqrt{5} d(\bx, \bx_i) \right)
\exp ( -\sqrt{5}d(\bx, \bx_i) ) \bL^{-1} (\bx - \bx_i).
\end{equation}
Since the derivative is bounded,
$\| \partial \bk(\bx, \bX)/\partial \bx \|_2 < \infty$ is satisfied.
Similarly, other kernels can be straightforwardly proved.
Thus, this lemma is concluded.
\qed
\end{proof}

From now, we show the number of local optima is upper-bounded, using the condition involved in a frequency domain.

\begin{lem}
	\label{lem:spectral}
	Let $\calX \subset \bbR^d$ be a compact set.
	Given some sufficiently large $|\hat{\bxi}| > 0$, 
	a spectral density of stationary covariance function for Gaussian process regression is zero for all $|\bxi| > |\hat{\bxi}|$ with very high probability.
	Then, the number of local maxima at iteration $t$, $\rho_t$ is upper-bounded.
\end{lem}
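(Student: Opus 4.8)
The plan is to exploit the assumed band-limitedness of the covariance function to show that the acquisition function oscillates only finitely often over the compact domain $\calX$, so that its critical points---and hence its local maxima---are finite in number and bounded in terms of the cut-off frequency $|\hat{\bxi}|$, the domain size $\gamma$, and the dimension $d$.

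First I would invoke Bochner's theorem to represent the stationary covariance as $k(\bx, \bx') = \int_{\bbR^d} S(\bxi) \exp(2\pi i \bxi^\top (\bx - \bx')) \, \mathrm{d}\bxi$, where $S$ denotes the spectral density. The hypothesis $S(\bxi) = 0$ for $|\bxi| > |\hat{\bxi}|$ (holding with very high probability) means that every translate $k(\bx, \bx_i)$, viewed as a function of $\bx$, is band-limited with bandwidth $|\hat{\bxi}|$. Since the posterior mean $\mu(\bx) = \bk(\bx, \bX) \tilde{\bK}^{-1} \by$ is a finite linear combination of such translates, it inherits the same band limit, and the same holds for each term comprising $\sigma^2(\bx)$.

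Next I would reduce the count of local maxima to a count of stationary points: each local maximum is a zero of the gradient, so it suffices to bound the number of zeros of the gradient over $\calX$. Here I would use the fact that a band-limited function extends to an entire function of exponential type (Paley--Wiener), whose zeros have bounded density at the Nyquist rate; consequently, in one dimension the number of zeros over an interval of length $\gamma$ is $O(\gamma |\hat{\bxi}|)$, and a multidimensional analogue bounds the number of critical points of a band-limited field on $\calX$ in terms of $\gamma$, $|\hat{\bxi}|$, and $d$. Because differentiation preserves the band limit, this yields a finite bound for $\mu$ and $\sigma^2$ directly.

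The hard part will be transferring this bound from the band-limited $\mu$ and $\sigma^2$ to the acquisition function itself, which is a nonlinear composition---through $\Phi$, $\phi$, the ratio $z(\bx)$, and the square root $\sigma(\bx)$---and is therefore not exactly band-limited. To address this I would restrict to the region $\{ \bx : \sigma(\bx) > \sigma_n \}$, on which the acquisition is smooth by \lemref{lem:pi_lipschitz}--\lemref{lem:ucb_lipschitz}, and argue that its stationary points solve an equation built from $\mu$, $\sigma$, and their band-limited gradients, whose solution set therefore inherits a finite count; alternatively, one may approximate the smooth, rapidly decaying nonlinearity by a band-limited surrogate within the stated high-probability tolerance, so that the critical-point count of $a(\bx | \calD_{t-1})$ is controlled by that of the underlying band-limited fields. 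Combining these steps bounds $\rho_t$ from above, as claimed.
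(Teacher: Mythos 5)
Your proposal follows the same skeleton as the paper's proof---use Bochner's theorem and the high-probability band-limit $S(\bxi)=0$ for $|\bxi|>|\hat{\bxi}|$, reduce the count of local maxima to a count of critical points, and conclude finiteness on the compact set $\calX$---but you execute the key step with genuinely different machinery. Where the paper cites Sard's theorem for Lipschitz functions (via \lemref{lem:pi_lipschitz}--\lemref{lem:ucb_lipschitz}) and then simply \emph{asserts} that a band-limited function has finitely many local maxima, you make that step quantitative: the posterior mean and variance are finite linear combinations (and products) of band-limited translates of the kernel, hence extend to entire functions of exponential type by Paley--Wiener, whose zero density is controlled at the Nyquist rate, giving a bound on the order of $\gamma|\hat{\bxi}|$ per dimension rather than bare finiteness. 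This is a real improvement: Sard's theorem concerns the measure of critical \emph{values}, not the number or location of critical points, so the paper's invocation of it does essentially no work, whereas your zero-counting argument is the correct tool. You also identify, and the paper silently skips, the genuine difficulty that the acquisition function itself is not band-limited, being a nonlinear composition through $\Phi$, $\phi$, and $1/\sigma$; your first repair strategy does work, at least for PI and GP-UCB, since clearing denominators in $\nabla a = 0$ (using $\phi > 0$ and $\nabla\sigma = \nabla\sigma^2/(2\sigma)$) yields an equation whose left- and right-hand sides are products of band-limited fields. Two caveats remain on your side: products of band-limited functions enlarge the bandwidth (to $2|\hat{\bxi}|$ or $3|\hat{\bxi}|$), which should be tracked in the final bound, and in dimension $d>1$ the zeros of a band-limited gradient field need not be isolated without a non-degeneracy assumption---but the paper's own argument is strictly weaker on both points, so your version is the more defensible of the two.
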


\begin{proof}
	By Sard's theorem~\cite{SardA1942bams} for a Lipschitz-continuous function~\cite{BarbetL2016ijm}, critical points (i.e., the points whose gradients are zero) do not exist almost everywhere.
	Since the number of local maxima $\rho_t$ is upper-bounded by the number of critical points, it can be a starting point to bound $\rho_t$.
	Especially, by \lemref{lem:pi_lipschitz}, \lemref{lem:ei_lipschitz}, and \lemref{lem:ucb_lipschitz}, the number of local maxima $\rho_t$ can be restrained in the compact set $\calX$.
	Since it cannot express the upper-bound of $\rho_t$, we transform a stationary covariance function using a Fourier transform and obtain the spectral density of each covariance function~\cite[Chapter 4]{RasmussenCE2006book}.
	Because a spectral density of stationary covariance function is naturally a light-tail function by Bochner's theorem~\cite{SteinM1999book}, 
	a spectral density of covariance function for Gaussian process regression is zero for all $|\bxi| > |\hat{\bxi}|$ with very high probability (i.e., exponentially saturated probability over $|\bxi|$ due to the form of stationary kernels~\cite[Chapter 4]{RasmussenCE2006book}), 
	given some sufficiently large $|\hat{\bxi}|$.
	Then, a function has finite local maxima, which implies that the number of local maxima at iteration $t$ is upper-bounded.
	\qed
\end{proof}

Based on \lemref{lem:spectral}, we can prove the ergodicity of local maxima 
that are able to 
be discovered by the local optimization method and
coincided with the local optimizers started from different initial points.

\begin{lem}
	\label{lem:prob_both}
	Let the number of local maxima of acquisition function at iteration $t$ be $\rho_t$.
	Since local optimizers which are started from some initial conditions $\in \calX$ are ergodic to all the local maxima,
	the probability of reaching each solution is $\beta_1, \ldots, \beta_{\rho_t} > 0$ such that $\Sigma_{i = 1}^{\rho_t} \beta_i = 1$.
\end{lem}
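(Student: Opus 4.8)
The plan is to give the informal notion of ``ergodicity'' a precise measure-theoretic reading through \emph{basins of attraction}, and to realize the $\beta_i$ as probabilities that a random initialization lands in each basin. Let $\bx^{(1)}, \ldots, \bx^{(\rho_t)}$ denote the local maxima of the acquisition function at round $t$, which form a finite collection by \lemref{lem:spectral}. For each $i$ I would define the basin $B_i \subseteq \calX$ to be the set of initial conditions from which the iterative optimizer \eqref{eqn:local_acq} converges to $\bx^{(i)}$, and then set $\beta_i = \bbP(\bx^{(0)} \in B_i)$, where $\bx^{(0)}$ is the random initial point drawn from whatever full-support distribution on $\calX$ is used to seed the optimizer (e.g.\ uniform). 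The lemma then amounts to two claims about these $\beta_i$: positivity and summation to one.

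First I would establish $\beta_i > 0$ for every $i$. Since the acquisition function is Lipschitz-continuous by \lemref{lem:pi_lipschitz} to \lemref{lem:ucb_lipschitz} and each $\bx^{(i)}$ is a (strict) local maximum, there is an open neighborhood of $\bx^{(i)}$ on which an ascent iteration converges to $\bx^{(i)}$; this neighborhood lies inside $B_i$ and has positive Lebesgue measure in $\bbR^d$. Because the seeding distribution has full support, $\beta_i = \bbP(\bx^{(0)} \in B_i) > 0$. The ``ergodicity'' hypothesis is precisely what guarantees that each of the $\rho_t$ maxima is in fact reachable, i.e.\ that no basin $B_i$ is empty, so this positivity holds for all $i \in \{1, \ldots, \rho_t\}$.

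Next I would show $\sum_{i=1}^{\rho_t} \beta_i = 1$, which reduces to arguing that the basins cover $\calX$ up to a set of measure zero: from almost every initial condition the optimizer converges to one of the $\rho_t$ local maxima rather than to a saddle or other non-maximal critical point, and the basin boundaries themselves have measure zero. Here I would reuse the Sard-type argument already invoked in \lemref{lem:spectral} to rule out pathological critical sets, together with the standard fact that the set of initial conditions carried by ascent dynamics to a strict saddle has measure zero. Since the seeding distribution is absolutely continuous, it assigns no mass to this exceptional set, so $\sum_{i=1}^{\rho_t} \beta_i = \bbP\big( \bx^{(0)} \in \bigcup_i B_i \big) = \bbP(\bx^{(0)} \in \calX) = 1$.

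The main obstacle is the almost-everywhere covering in the last step: making rigorous that the optimizer converges, and converges to a maximum rather than a saddle, from a full-measure set of starts. This is a statement about the specific BFGS/gradient-ascent dynamics rather than about the acquisition function alone, so the cleanest route is to record the required ``convergence-to-maxima from almost every start'' property as a mild regularity assumption on the optimizer and then lean on the finiteness of $\rho_t$ from \lemref{lem:spectral} and the measure-zero exceptional set to close the sum. By comparison, the positivity of each $\beta_i$ is routine once ergodicity grants that every basin is nonempty.
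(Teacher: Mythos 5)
Your proposal is correct in substance, but it is worth knowing that the paper's own ``proof'' of this lemma is far thinner than what you wrote: it consists of two sentences asserting that, when one starts from different initial conditions, ``it is obvious that all the local solutions are reachable,'' and concluding directly that the $\beta_i$ are positive and sum to one. In effect the paper treats the lemma as a definition of the $\beta_i$ dressed up as a claim, whereas you supply the measure-theoretic content that would actually justify it: realizing each $\beta_i$ as the mass that a full-support seeding distribution assigns to the basin of attraction $B_i$, getting positivity from an open neighborhood of each strict local maximum lying inside its basin, and getting $\sum_i \beta_i = 1$ from an almost-everywhere covering of $\calX$ by the basins (Sard-type arguments plus measure-zero stable sets of saddles). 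Your version buys precision and makes the hidden hypotheses explicit --- the initialization must have full support, and the optimizer must converge to a local maximum (not a saddle or a non-convergent trajectory) from almost every start, which you correctly flag as a property of the BFGS/ascent dynamics rather than of the acquisition function and propose to record as an assumption. The paper's version buys brevity at the cost of rigor; indeed, the gap you identify in your last paragraph (almost-everywhere convergence to maxima) is precisely the step the paper glosses over with the word ``obvious.'' One small caution: your positivity argument assumes the local maxima are strict; degenerate maxima (plateaus) would need a separate remark, though this is consistent with the level of generality the paper itself is working at.
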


\begin{proof}
	If we start from some different initial conditions $\in \calX$, it is obvious that all the local solutions are reachable.
	Therefore, all the local optimizers are ergodic, and the probability of reaching each solution is larger than zero and they sum to one: $\sum_{i = 1}^{\rho_t} \beta_i = 1$,
	where $\beta_1, \ldots, \beta_{\rho_t} > 0$.
	\qed
\end{proof}

We now prove the distance between two points acquired by 
\defref{def:go_af} and \defref{def:lo_af} is bounded with a probability.

\begin{lem}
	\label{lem:bounded_distance}
	Let $\calX \subset \mathbb{R}^d$ be a compact space where 
	$\gamma = \max_{\bx_1, \bx_2 \in \calX} \| \bx_1 - \bx_2\|_2.$
	Then, for ${\gamma > \epsilon_1 > 0}$, we have
	\begin{equation}
		\bbP \big(\| \bx_{t, g} - \bx_{t, l} \|_2 \geq \epsilon_1 \big) 
		\leq \frac{\gamma}{\epsilon_1} (1 - \beta_g),
	\end{equation}
	where $\beta_g$ is the probability that some local optimizer is coincided with the global optimizer of the acquisition function.
\end{lem}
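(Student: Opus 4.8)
The plan is to treat $Z = \| \bx_{t, g} - \bx_{t, l} \|_2$ as a single nonnegative random variable---the randomness entering through the random initial condition of the local optimization method that produces $\bx_{t, l}$, while $\bx_{t, g}$ is fixed---and to apply Markov's inequality. Since $Z \geq 0$, Markov's inequality gives $\bbP(Z \geq \epsilon_1) \leq \bbE[Z] / \epsilon_1$, so the entire task reduces to establishing the bound $\bbE[Z] \leq \gamma (1 - \beta_g)$.

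To bound $\bbE[Z]$ I would condition on the coincidence event $C = \{ \bx_{t, l} = \bx_{t, g} \}$, which has probability $\beta_g$ by the definition of $\beta_g$ as the probability that the local optimizer coincides with the global optimizer. On $C$ we have $Z = 0$, whereas on the complement $C^c$ the value $Z$ is at most the diameter $\gamma = \max_{\bx_1, \bx_2 \in \calX} \| \bx_1 - \bx_2 \|_2$, since both $\bx_{t, g}$ and $\bx_{t, l}$ lie in the compact set $\calX$. Hence $\bbE[Z] = \beta_g \cdot 0 + (1 - \beta_g) \, \bbE[Z \mid C^c] \leq (1 - \beta_g) \gamma$, and substituting this into Markov's inequality yields exactly $\bbP(Z \geq \epsilon_1) \leq \frac{\gamma}{\epsilon_1} (1 - \beta_g)$. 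The hypothesis $\gamma > \epsilon_1$ simply restricts attention to the regime in which $\{ Z \geq \epsilon_1 \}$ is not vacuous, since for $\epsilon_1 \geq \gamma$ the event is empty and its probability is trivially zero.

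The individual manipulations are routine, so the real content lies in fixing the probabilistic model carefully enough that each step is legitimate. I expect the main obstacle to be making $\beta_g$ precise: one must specify the distribution over initial conditions (and therefore over $\bx_{t, l}$), argue that the coincidence event $C$ is measurable, and check that $\bbP(C) = \beta_g$ is consistent with the per-basin reaching probabilities $\beta_1, \ldots, \beta_{\rho_t}$ from \lemref{lem:prob_both}---presumably $\beta_g$ is the reaching probability of whichever basin contains the global optimizer. Once this setup is fixed, the diameter bound $Z \leq \gamma$ on $C^c$ and the application of Markov's inequality are immediate.
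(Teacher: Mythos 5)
Your proposal is correct and follows essentially the same route as the paper's own proof: both apply Markov's inequality to $\| \bx_{t, g} - \bx_{t, l} \|_2$ and then bound the expectation by splitting on the coincidence event (probability $\beta_g$, where the distance is zero) versus its complement (where the distance is at most the diameter $\gamma$), giving $\bbE\big[ \| \bx_{t, g} - \bx_{t, l} \|_2 \big] \leq (1 - \beta_g)\gamma$. Your conditional-expectation formulation is a slightly cleaner rendering of the paper's informal case-split notation, but the argument is the same.
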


\begin{proof}
By Markov inequality for $\epsilon_1 > 0$,
we have
\begin{equation}
\label{eqn:lem_1_1}
\bbP \big(\| \bx_{t, g} - \bx_{t, l} \|_2 \geq \epsilon_1 \big) \leq \frac{1}{\epsilon_1} 
\bbE \big[ \| \bx_{t, g} - \bx_{t, l} \|_2 \big].
\end{equation}

Following from \lemref{lem:prob_both} and \eqref{eqn:lem_1_1},
the expectation in the right-hand side of \eqref{eqn:lem_1_1} is calculated as
\begin{align}
	&\frac{1}{\epsilon_1} \bbE \big[ \| \bx_{t, g} - \bx_{t, l} \|_2 \big] \nonumber\\
	&= \left. \frac{1}{\epsilon_1} \beta_g \| \bx_{t, g} - \bx_{t, l} \|_2  \right|_{\bx_{t, g} =  \bx_{t, l}}
	\quad + \left. \frac{1}{\epsilon_1} (1 - \beta_g) \| \bx_{t, g} - \bx_{t, l} \|_2  \right|_{\bx_{t, g} \neq  \bx_{t, l}} \nonumber\\
	&\leq \frac{1}{\epsilon_1} (1 - \beta_g) \gamma,
\end{align}
which completes the proof.
\qed
\end{proof}

The lower-bound of $\frac{| f(\bx_{t, 1}) - f(\bx_{t, 2}) |}{\| \bx_{t, 1} - \bx_{t, 2} \|_2}$ can be expressed with a probability as follows.

\begin{lem}
	\label{lem:lipschitz}
	Given any $\epsilon_2 > 0$, the probability that ${\frac{| f(\bx_{t, g}) - f(\bx_{t, l}) |}{\| \bx_{t, g} - \bx_{t, l} \|_2} \geq \epsilon_2}$ is less than $\frac{M}{\epsilon_2}$:
	\begin{equation}
		\bbP \left( \frac{| f(\bx_{t, g}) - f(\bx_{t, l}) |}{\| \bx_{t, g} - \bx_{t, l} \|_2} \geq \epsilon_2 \right) \leq \frac{M}{\epsilon_2}.
	\end{equation}
\end{lem}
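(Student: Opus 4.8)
The plan is to treat the difference quotient
$Z = \frac{| f(\bx_{t, g}) - f(\bx_{t, l}) |}{\| \bx_{t, g} - \bx_{t, l} \|_2}$
as a single nonnegative random variable and then invoke Markov's inequality, exactly mirroring the argument already used in \lemref{lem:bounded_distance}. Since $\bx_{t, g}$ and $\bx_{t, l}$ are produced by (possibly randomized) optimization routines applied to random data, $Z$ is a random quantity; the absolute value in the numerator together with the norm in the denominator force $Z \geq 0$, so Markov's inequality applies directly and yields
\[
\bbP \left( Z \geq \epsilon_2 \right) \leq \frac{1}{\epsilon_2} \bbE[Z].
\]

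Next I would bound $\bbE[Z]$ by the Lipschitz constant of the objective. I would invoke the standing assumption from \secref{sec:intro} that $f$ is Lipschitz-continuous, i.e., there exists $M < \infty$ with $| f(\bx_1) - f(\bx_2) | \leq M \| \bx_1 - \bx_2 \|_2$ for all $\bx_1, \bx_2 \in \calX$. Specializing this to the pair $(\bx_{t, g}, \bx_{t, l})$ gives the pointwise (realization-by-realization) bound $Z \leq M$ whenever $\bx_{t, g} \neq \bx_{t, l}$, hence $\bbE[Z] \leq M$. Substituting into the Markov bound produces $\bbP ( Z \geq \epsilon_2 ) \leq M / \epsilon_2$, which is the claimed inequality, with $M$ being precisely the Lipschitz constant of $f$ referenced in \thmref{thm:first} and \thmref{thm:second}.

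The only real care needed concerns the degenerate case $\bx_{t, g} = \bx_{t, l}$, where the ratio is a $0/0$ form. I would handle it by the convention $Z = 0$ on this event (consistent with the Lipschitz bound, since the numerator also vanishes there), or equivalently by restricting attention to the event $\{ \bx_{t, g} \neq \bx_{t, l} \}$ on which $Z$ is well defined; for $\epsilon_2 > 0$ the coincidence event contributes nothing to $\{ Z \geq \epsilon_2 \}$ and is in any case already tracked through $\beta_g$ in \lemref{lem:bounded_distance}. I expect this bookkeeping of the coincidence event to be the main (and essentially only) obstacle, since the remainder — Markov's inequality combined with the assumed Lipschitz continuity of $f$ — is immediate. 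One could even note that $Z \leq M$ holds deterministically, so the conclusion is slightly stronger than stated; I would nevertheless phrase it through Markov's inequality to keep the argument parallel with the preceding lemma and to feed cleanly into the union-bound step assembling \thmref{thm:first}.
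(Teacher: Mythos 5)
Your proposal is correct and follows essentially the same route as the paper: Markov's inequality applied to the nonnegative ratio, followed by bounding its expectation by the Lipschitz constant $M$ of $f$ (the standing assumption from \secref{sec:intro}). Your additional handling of the degenerate event $\bx_{t,g} = \bx_{t,l}$ and the remark that the bound $Z \leq M$ actually holds pointwise are refinements the paper omits, but they do not change the argument.
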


\begin{proof}
	By Markov's inequality, 
	we can express
	\begin{equation}
		\bbP \left( \frac{| f(\bx_{t, g}) - f(\bx_{t, l}) |}{\| \bx_{t, g} - \bx_{t, l} \|_2} \geq \epsilon_2 \right)
		\leq \frac{1}{\epsilon_2} \bbE \left[ \frac{| f(\bx_{t, g}) - f(\bx_{t, l}) |}{\| \bx_{t, g} - \bx_{t, l} \|_2} \right] \leq \frac{M}{\epsilon_2},
	\end{equation}
	where $M$ is the Lipschitz constant of function $f$, because $f$ is $M$-Lipschitz continuous 
	and $\bx_{t, g}, \bx_{t, l} \in \calX$.
	\qed
\end{proof}

\subsection{Proof of \thmref{thm:first}}

Now we present the proof of \thmref{thm:first} here.

\begin{proof}
	The probability of $\left| r_{t, g} - r_{t, l} \right| < \epsilon_l$ can be written as
	\begin{align}
		\bbP \big( \left| r_{t, g} - r_{t, l} \right| < \epsilon_l \big)
		&= \bbP \big(| (f(\bx_{t, g}) - f(\bx^\dagger)) - (f(\bx_{t, l}) - f(\bx^\dagger)) | < \epsilon_l \big) \nonumber\\
		&= \bbP \left( \| \bx_{t, g} - \bx_{t, l} \|_2 \cdot \frac{| f(\bx_{t, g}) - f(\bx_{t, l}) |}{\| \bx_{t, g} - \bx_{t, l} \|_2} < \epsilon_l \right). \label{eqn:thm_1_1}
	\end{align}
	We define two events:
	\begin{equation}
		E_1 = \big( \| \bx_{t, g} - \bx_{t, l} \|_2 < \epsilon_1 \big) \quad \textrm{and} \quad
		E_2 = \left( \frac{| f(\bx_{t, g}) - f(\bx_{t, l}) |}{\| \bx_{t, g} - \bx_{t, l} \|_2} < \epsilon_2 \right).
	\end{equation}
	Then, \eqref{eqn:thm_1_1} can be expressed as
	\begin{equation}
		\bbP \big( \left| r_{t, g} - r_{t, l} \right| < \epsilon_l \big) = \bbP \big( E_1 \cap E_2 \big),
	\end{equation}
	where $\epsilon_l = \epsilon_1 \epsilon_2$.
	Thus, \eqref{eqn:thm_1_1} can be written as
	\begin{equation}
	\bbP \big( E_1 \cap E_2 \big) = 1 - \bbP \big( E_1^c \cup E_2^c \big) \geq 1 - \bbP \big( E_1^c \big) - \bbP \big( E_2^c \big),
	\end{equation}
	since $\bbP \big( E_1^c \cup E_2^c \big)  \leq \bbP \big( E_1^c \big) + \bbP \big( E_2^c \big)$ 
	by Boole's inequality. Then, we have
	\begin{align}
		\bbP \big( E_1 \cap E_2 \big) &\geq 1 - \bbP \big( \| \bx_{t, g} - \bx_{t, l} \|_2 \geq \epsilon_1 \big)
		- \bbP \left( \frac{| f(\bx_{t, g}) - f(\bx_{t, l}) |}{\| \bx_{t, g} - \bx_{t, l} \|_2} \geq \epsilon_2 \right)\nonumber\\
		&\geq 1 - \frac{\gamma}{\epsilon_1} (1 - \beta_g) - \frac{M}{\epsilon_2},
	\end{align}
	where \lemref{lem:bounded_distance} and \lemref{lem:lipschitz} are used to arrive at the last inequality.
	Therefore, the proof is completed:
	\begin{equation}
		\bbP \big( \left| r_{t, g} - r_{t, l} \right| < \epsilon_{l} \big) \geq 1- \delta_{l},
	\end{equation}
	where $\delta_{l} = \frac{\gamma}{\epsilon_1} (1 - \beta_g) + \frac{M}{\epsilon_2}$.
	\qed
\end{proof}

As described above, \thmref{thm:first} implies that the regret difference is basically controlled by $\gamma$, $\beta_g$, and $M$.
For example, if $\rho_t$ is close to one, the regret difference is tight with high probability.
On the other hand, if $\rho_t$ goes to infinity, the difference is tight with low probability.

\begin{figure}[t]
	\begin{center}
		\subfigure
		{
			\includegraphics[width=0.31\textwidth]{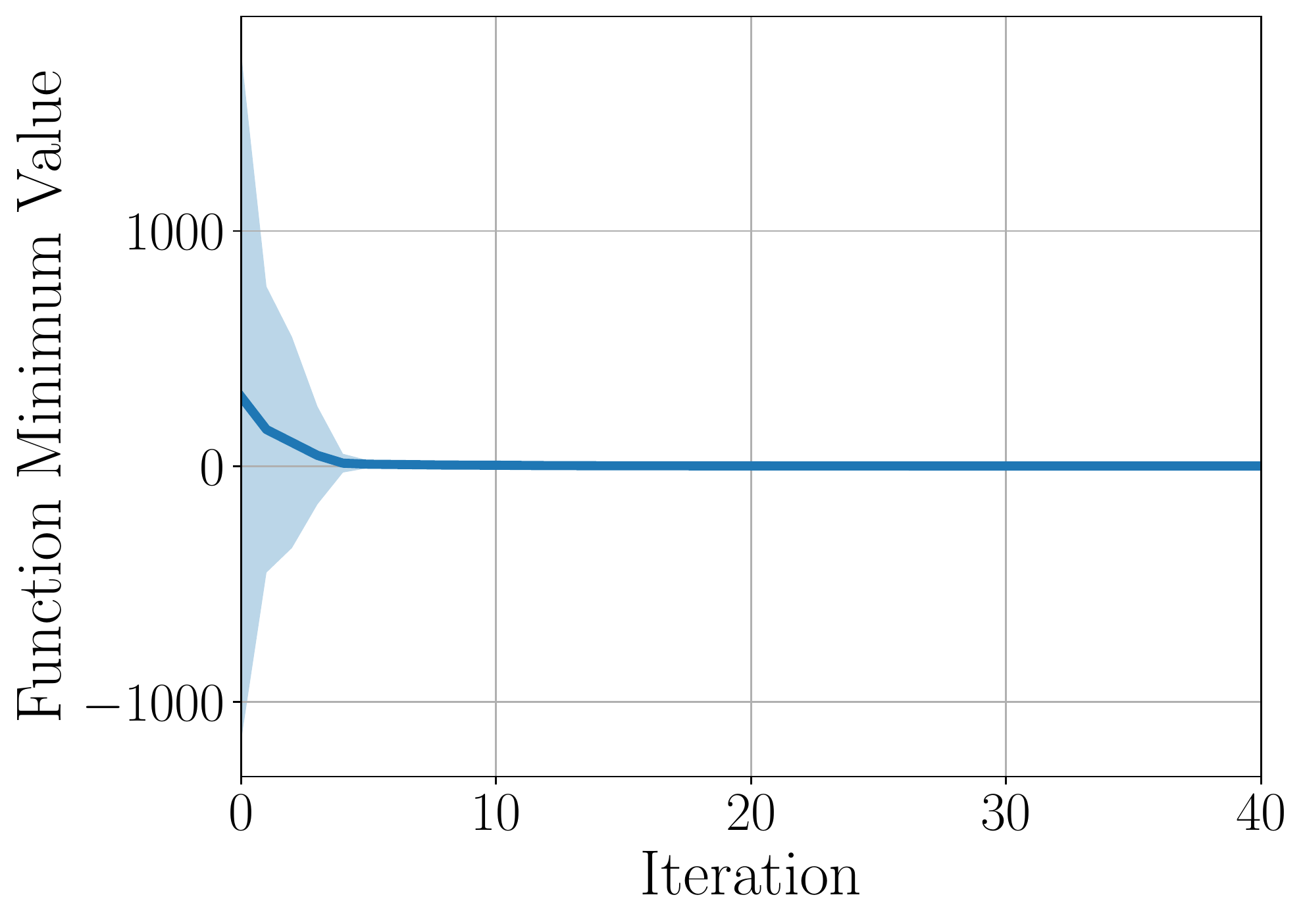}
			\label{fig:thm_101}
		}
		\subfigure
		{
			\includegraphics[width=0.31\textwidth]{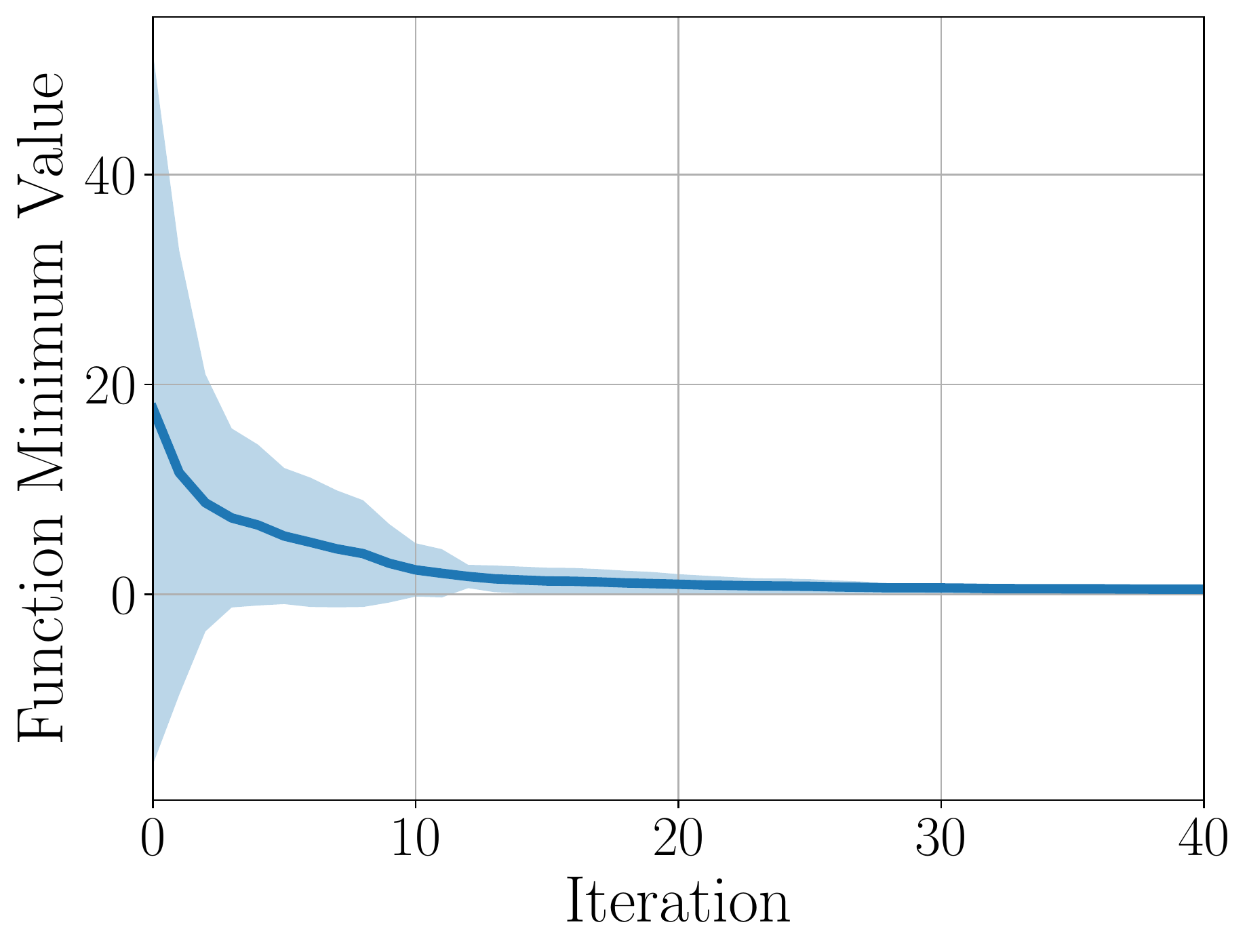}
			\label{fig:thm_102}
		}
		\subfigure
		{
			\includegraphics[width=0.31\textwidth]{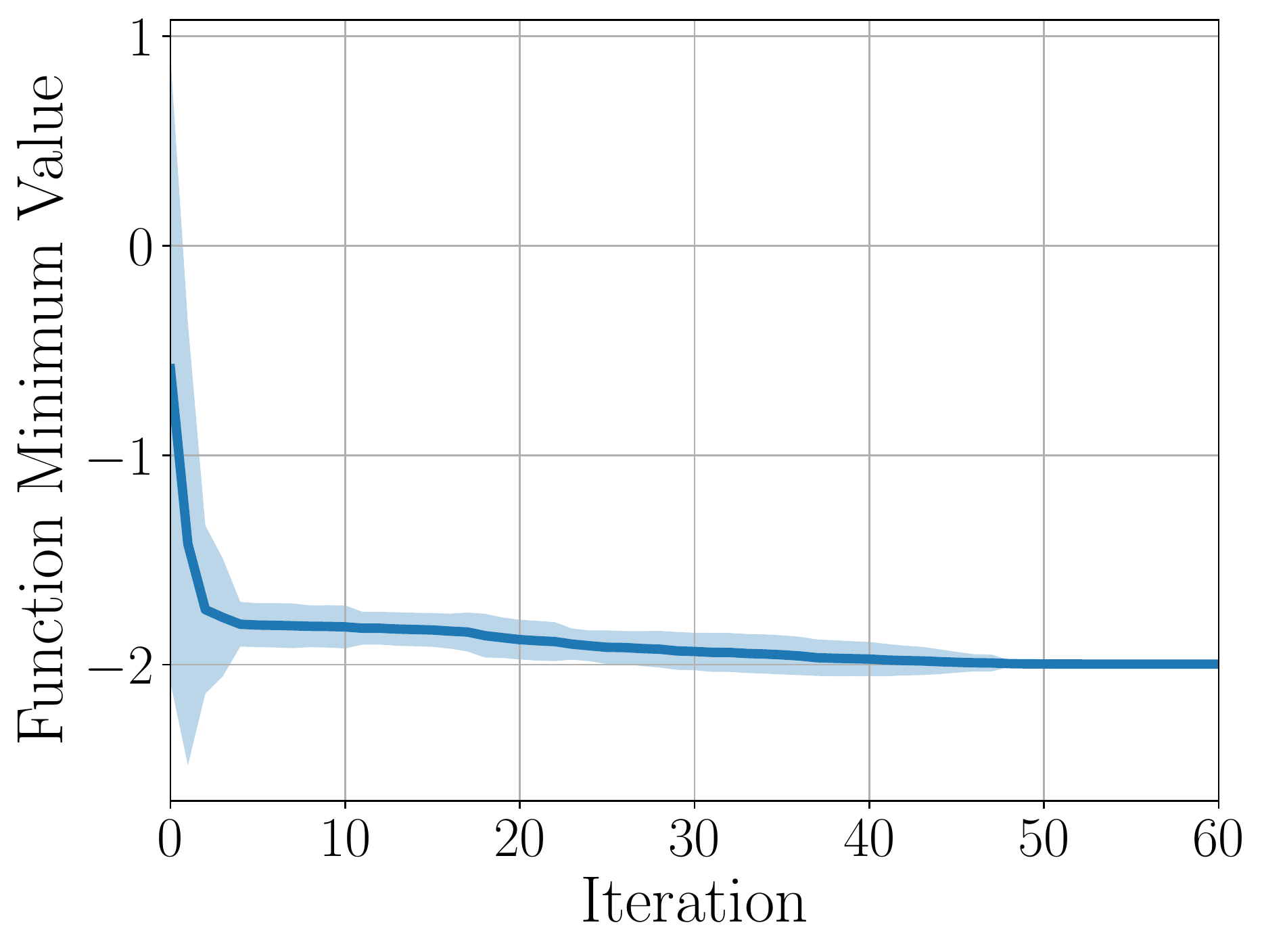}
			\label{fig:thm_103}
		}
		\setcounter{subfigure}{0}
		\subfigure[Beale]
		{
			\includegraphics[width=0.31\textwidth]{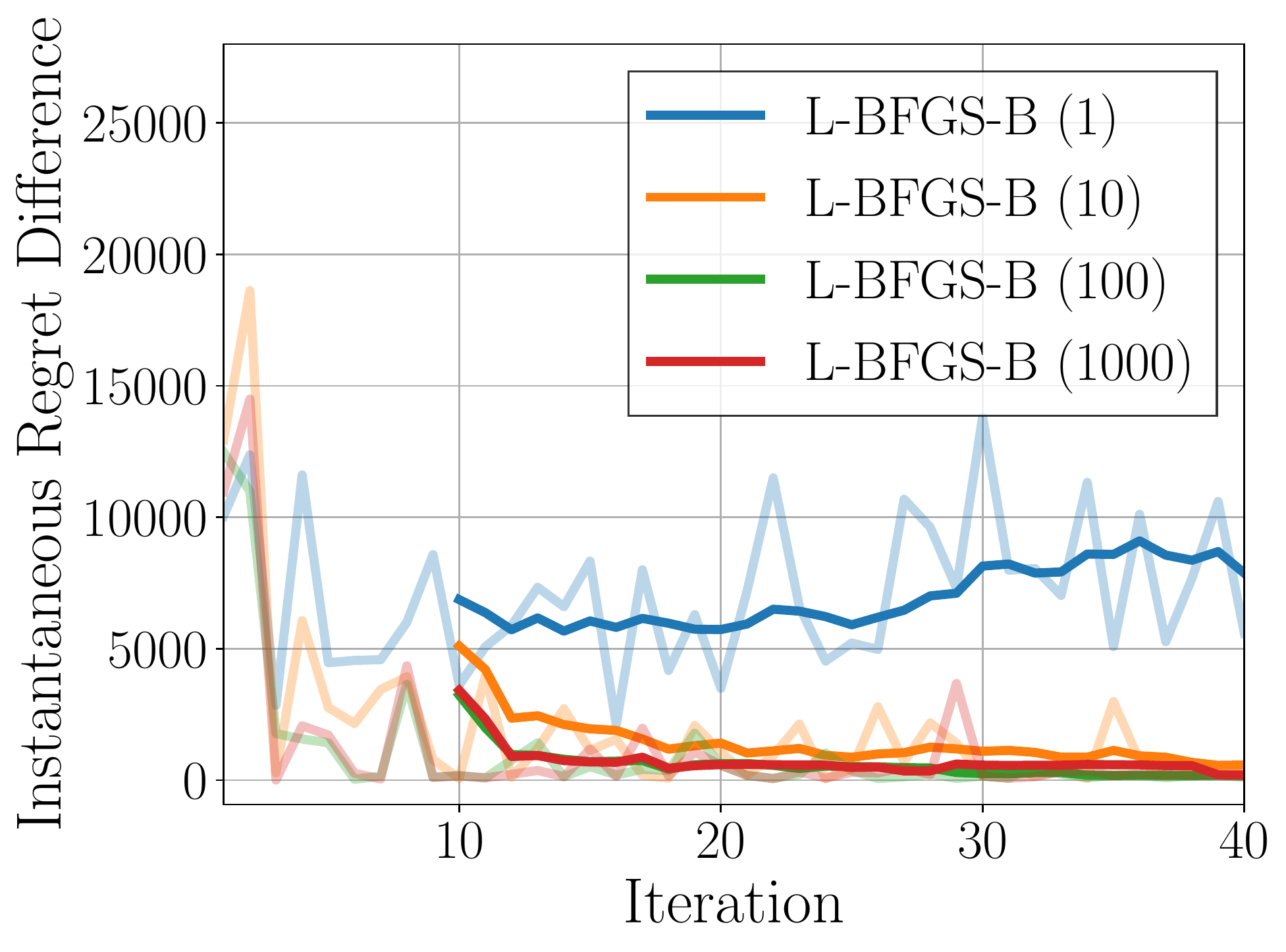}
			\label{fig:thm_111}
		}
		\subfigure[Branin]
		{
			\includegraphics[width=0.31\textwidth]{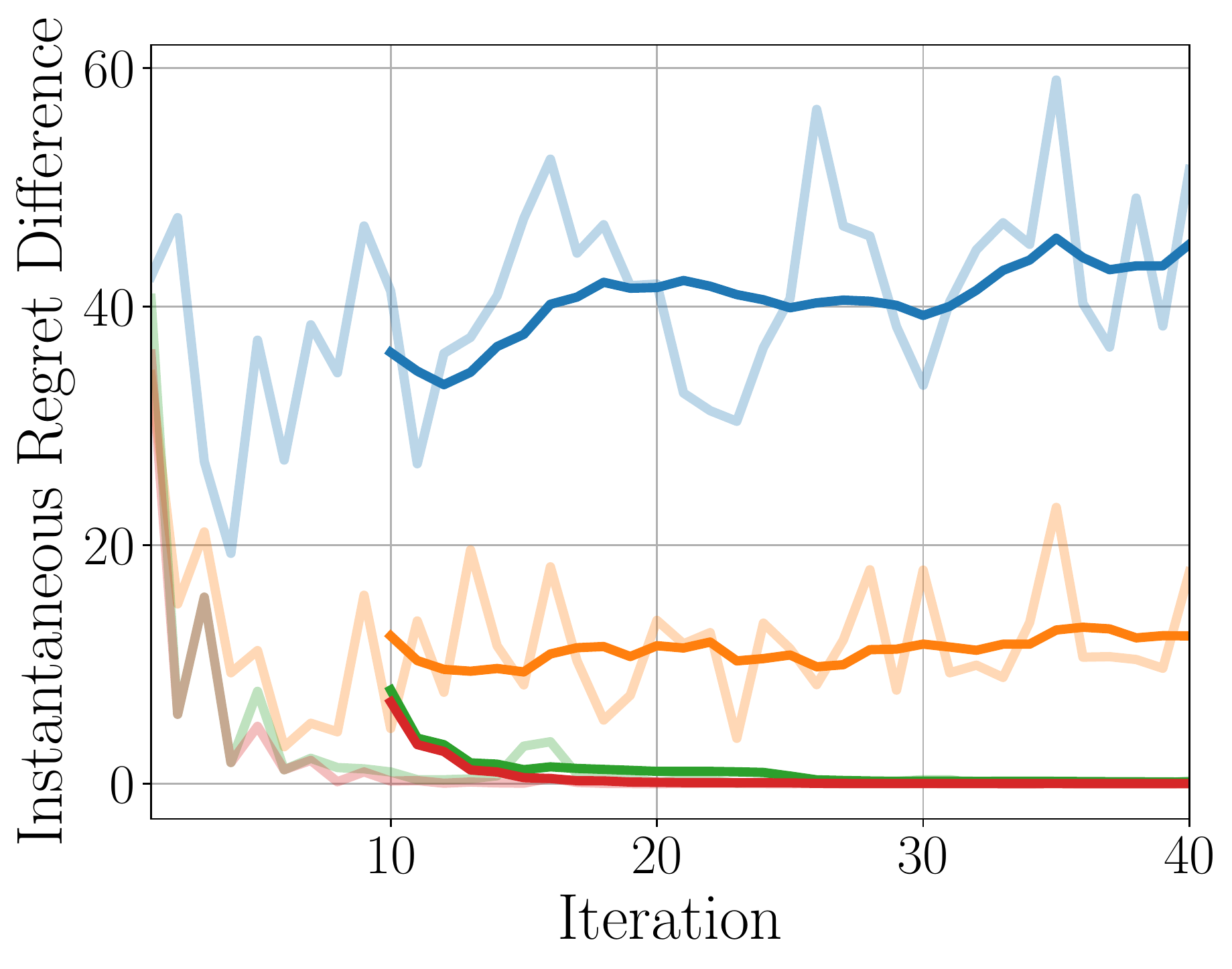}
			\label{fig:thm_112}
		}
		\subfigure[Cosines (2 dim.)]
		{
			\includegraphics[width=0.31\textwidth]{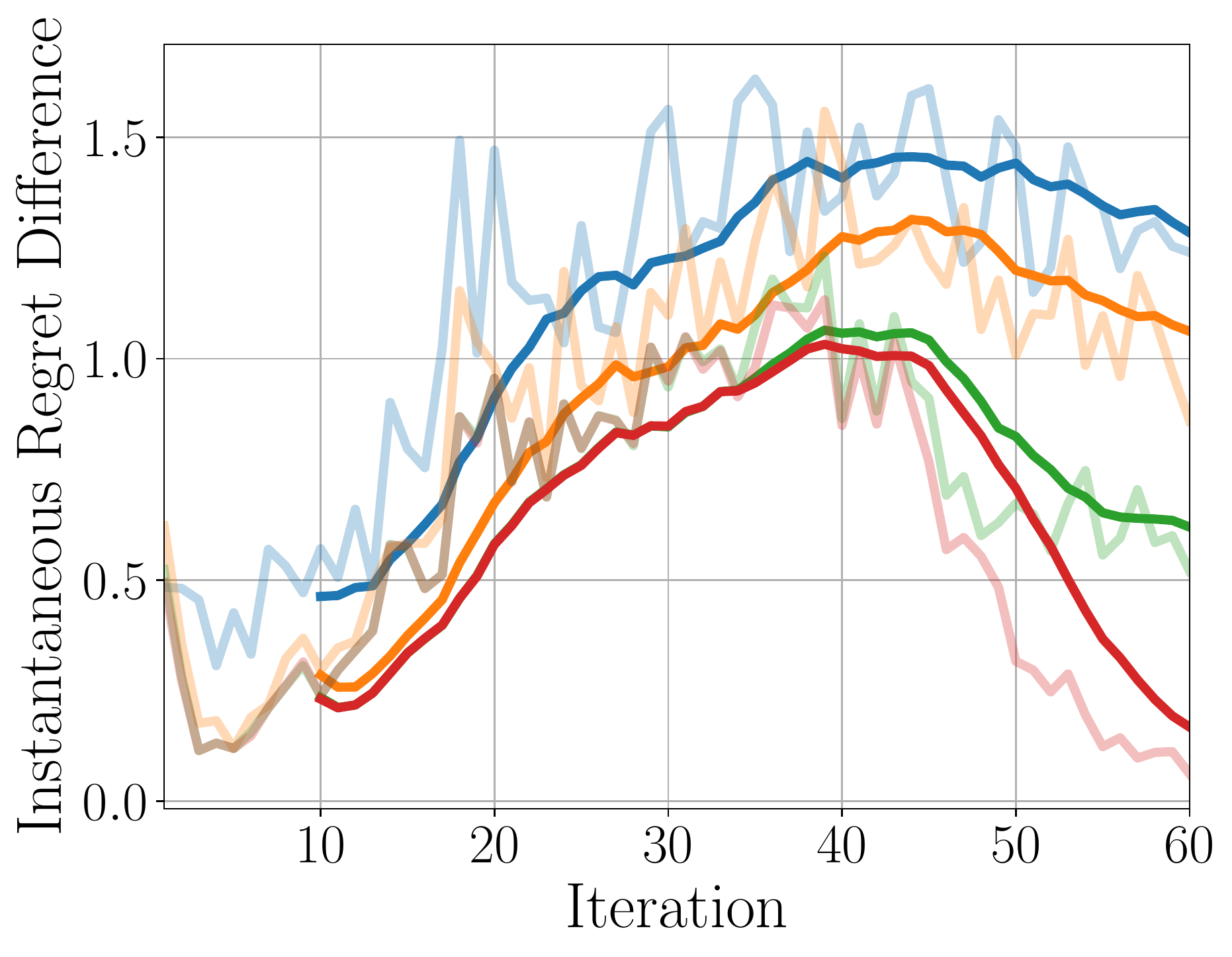}
			\label{fig:thm_113}
		}
	\end{center}
	\caption{Empirical results on \thmref{thm:first} and \thmref{thm:second}.
	The caption of each figure indicates a target function.
	The upper panel of each figure is an optimization result for a global optimizer,
	and the lower panel is the regret differences between a global optimizer and four types of local optimizer (i.e., multi-started local optimizers found by starting from \{1, 10, 100, 1000\} initial points).
	Some legends of the lower panels are missed not to interfere with the graphs, but all the legends are same.
	For the lower panels, transparent lines are observed instantaneous regret differences and solid lines are moving average (10 steps) of the transparent lines.
	All experiments are repeated 50 times.}
	\label{fig:val_thm_1}
\end{figure}

\begin{figure}[t]
	\begin{center}
		\subfigure
		{
			\includegraphics[width=0.31\textwidth]{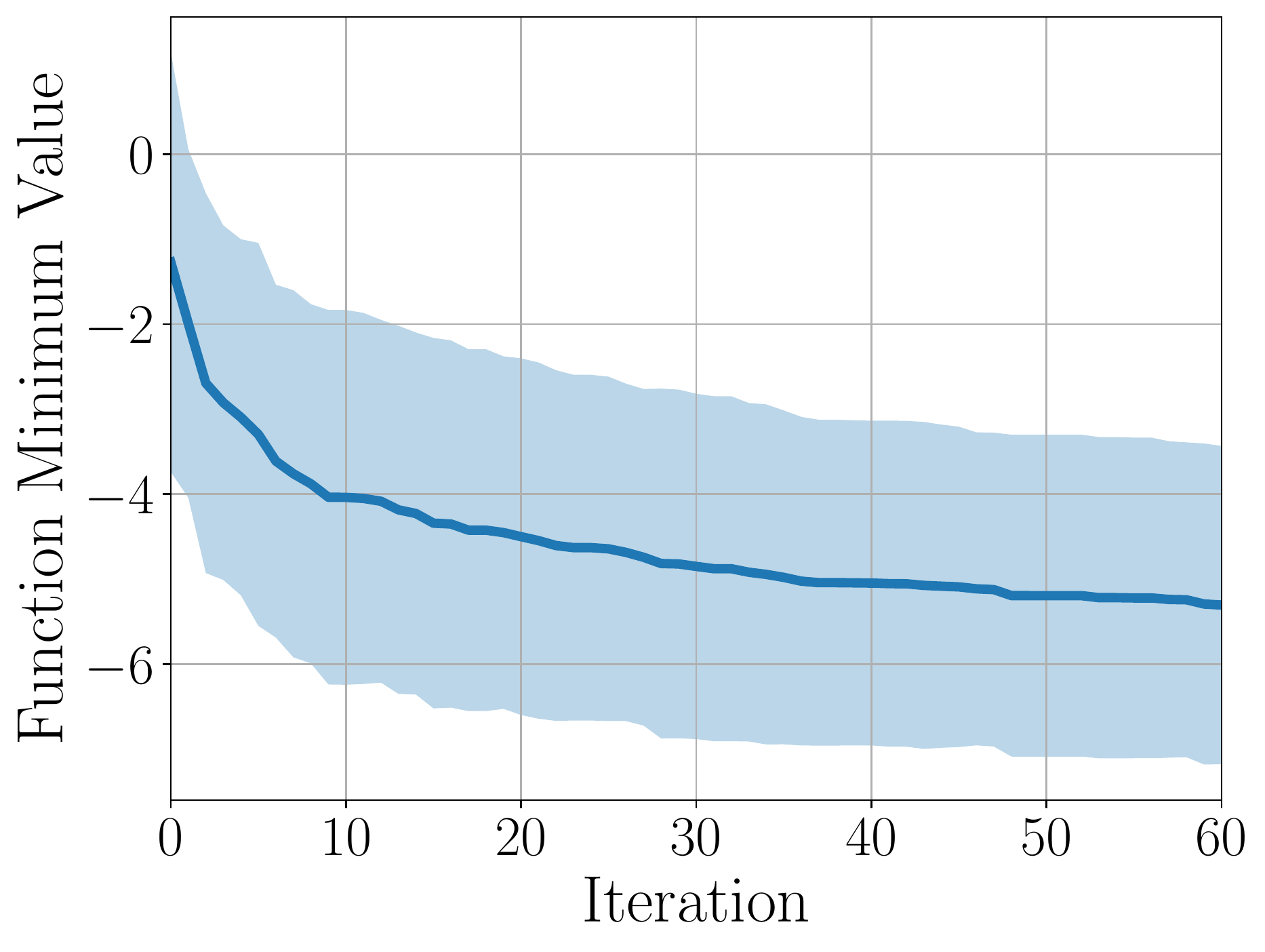}
			\label{fig:thm_201}
		}
		\subfigure
		{
			\includegraphics[width=0.31\textwidth]{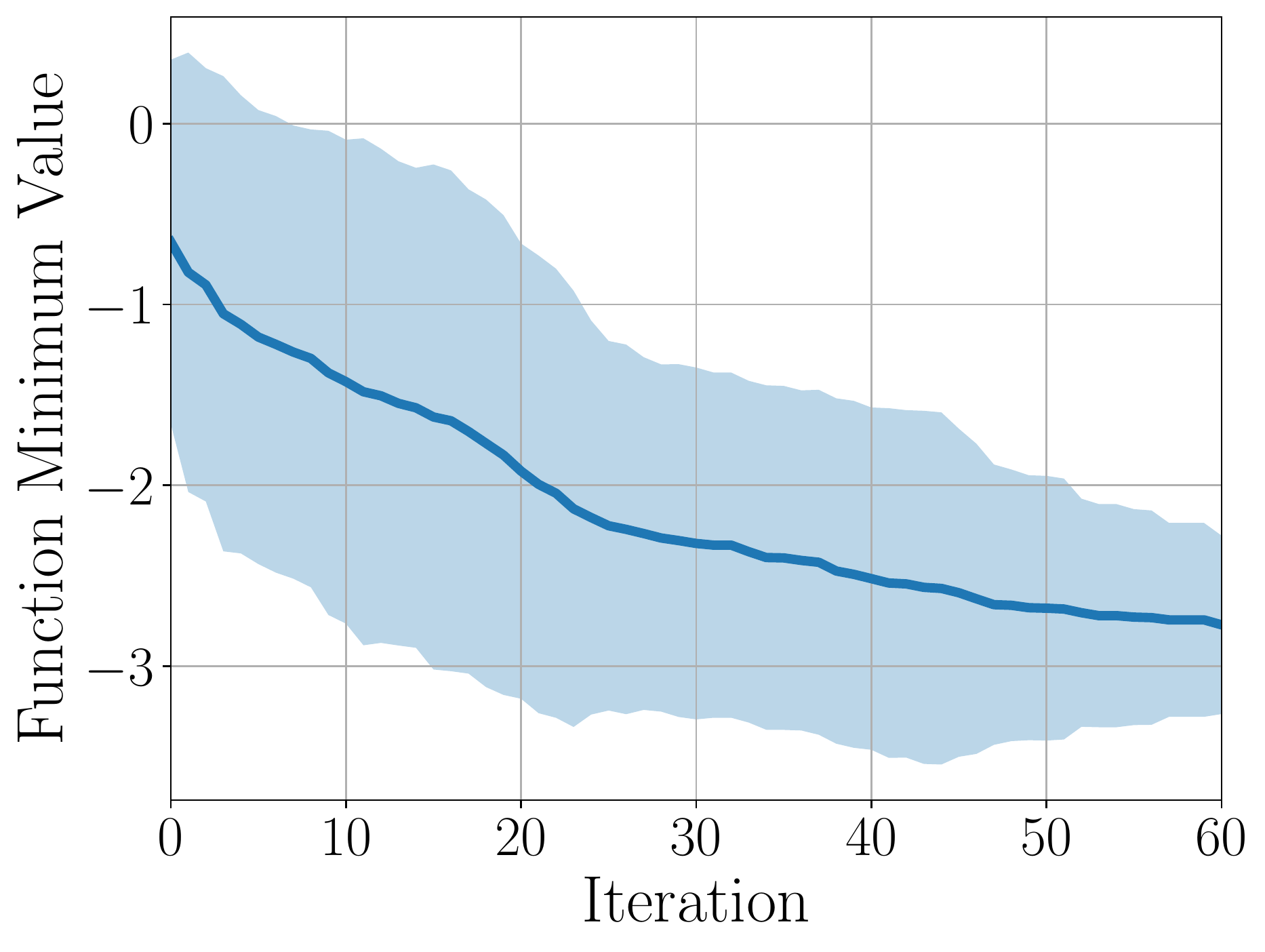}
			\label{fig:thm_202}
		}
		\subfigure
		{
			\includegraphics[width=0.31\textwidth]{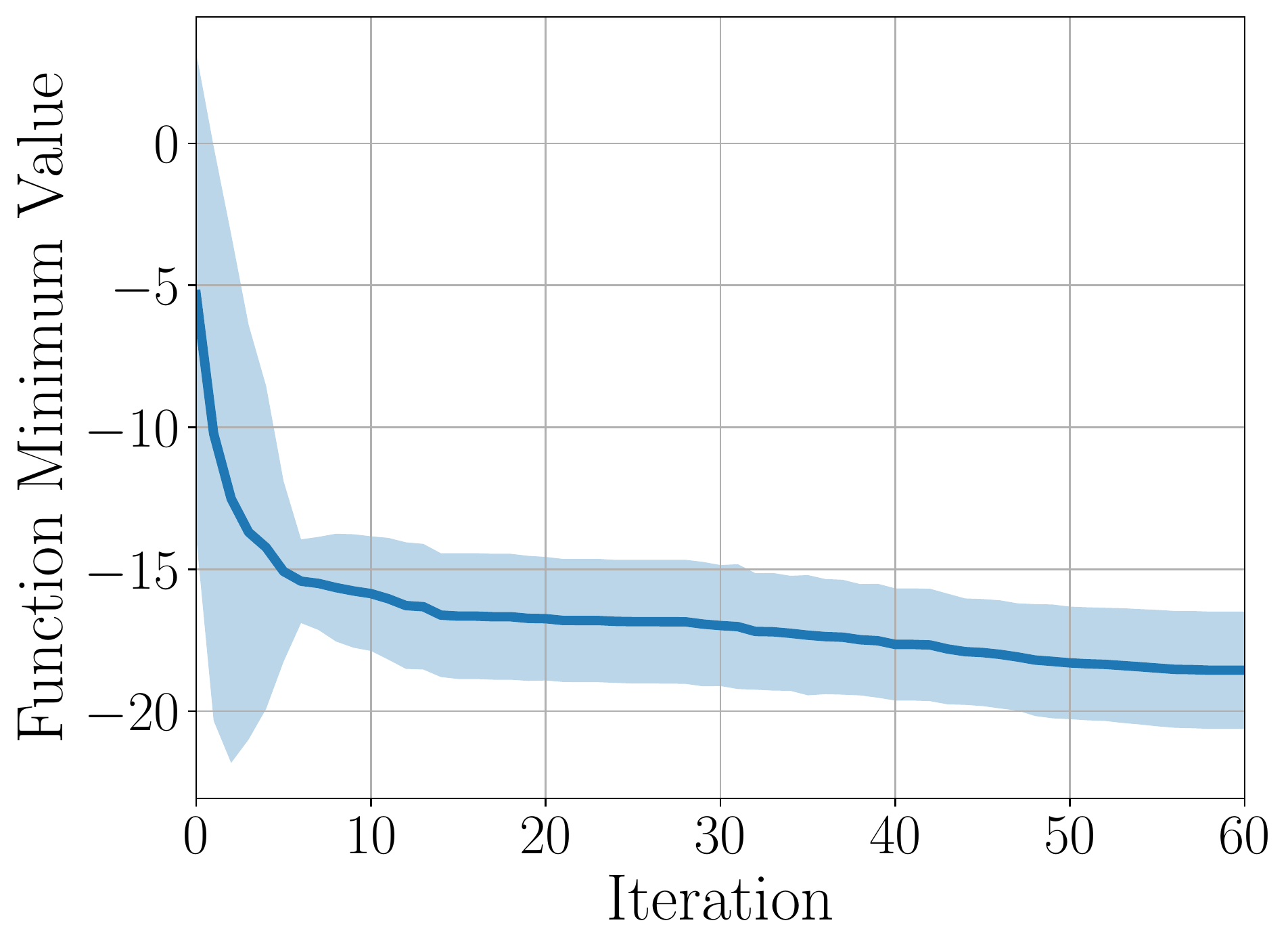}
			\label{fig:thm_203}
		}
		\setcounter{subfigure}{0}
		\subfigure[Cosines (8 dim.)]
		{
			\includegraphics[width=0.31\textwidth]{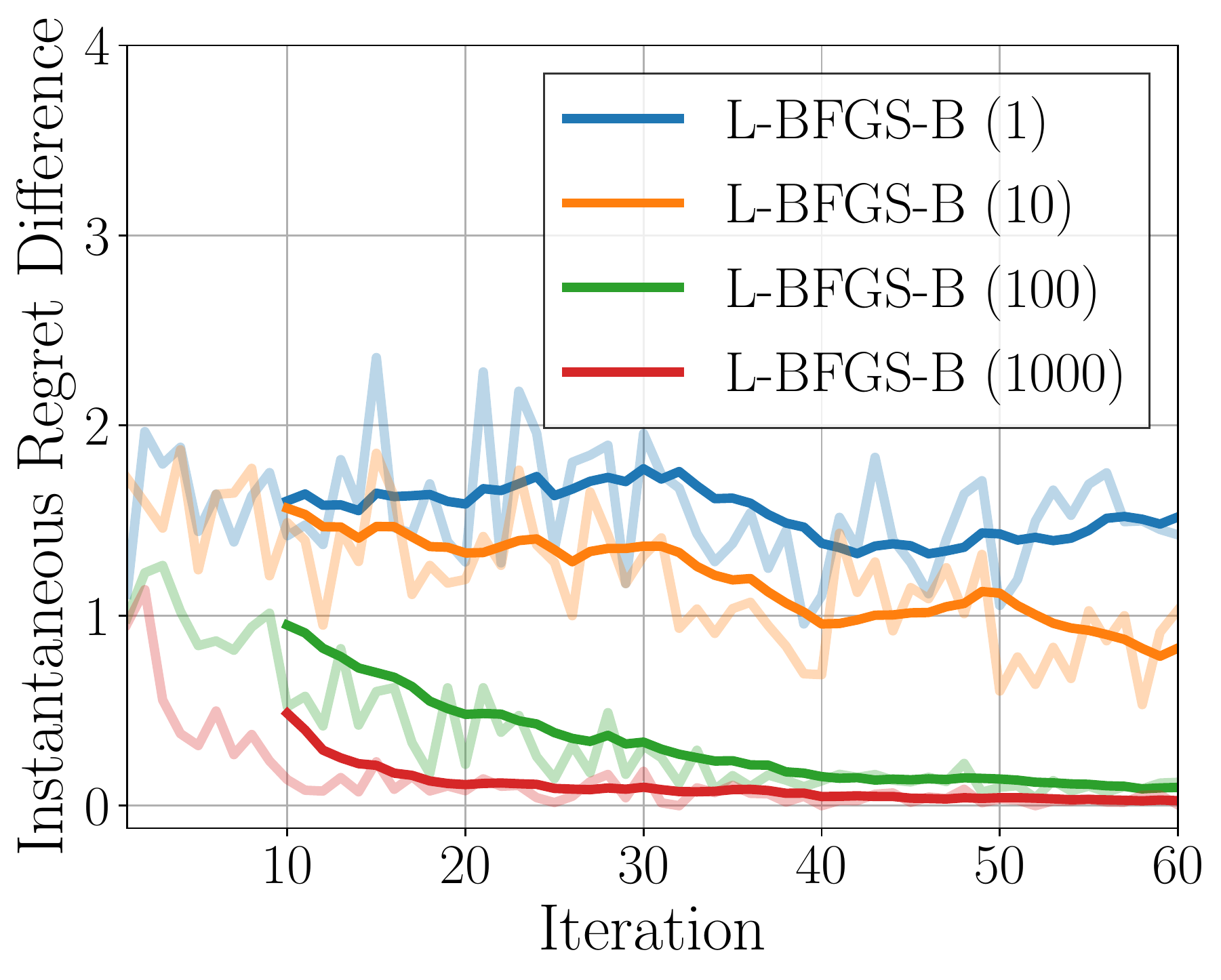}
			\label{fig:thm_211}
		}
		\subfigure[Hartmann6D]
		{
			\includegraphics[width=0.31\textwidth]{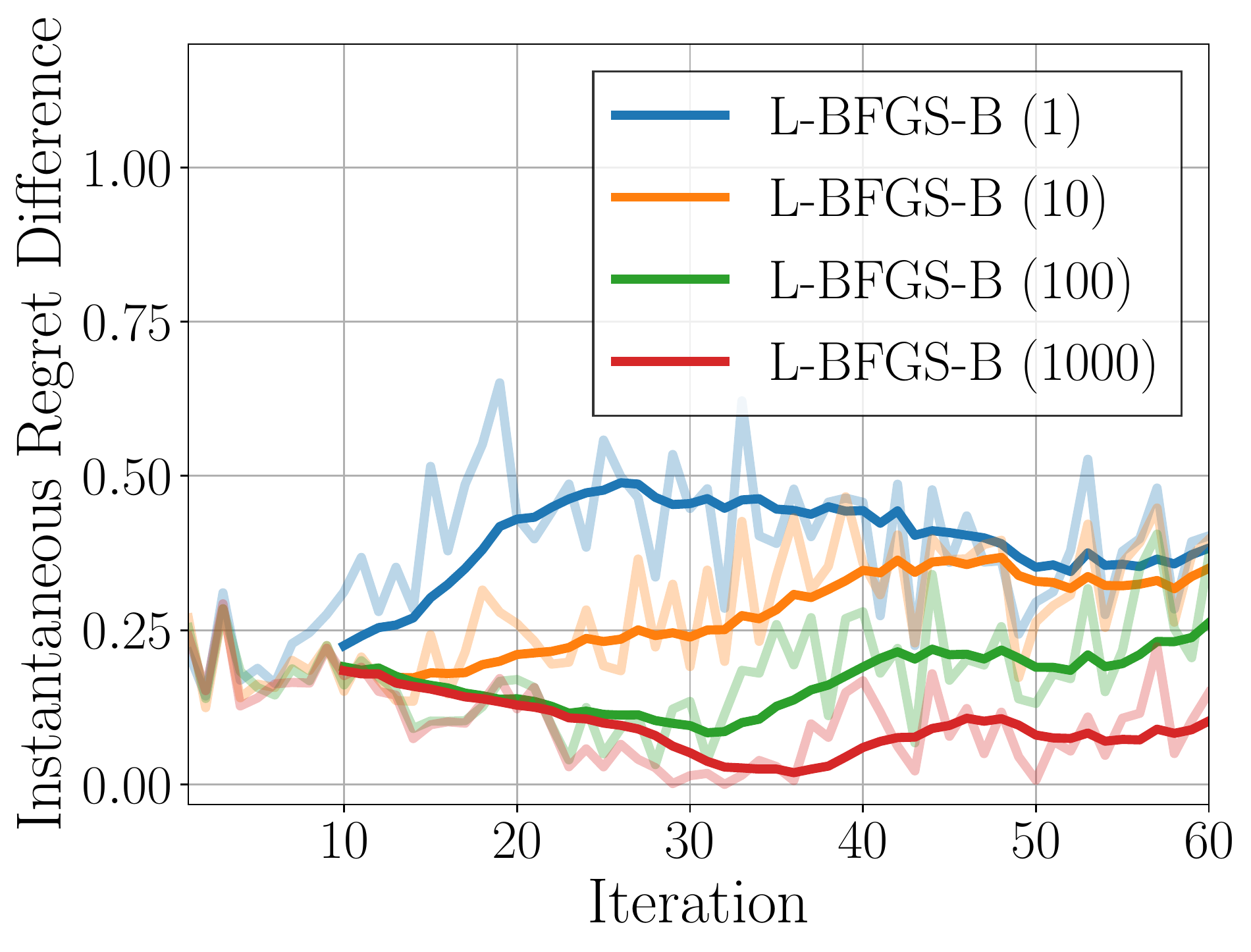}
			\label{fig:thm_212}
		}
		\subfigure[Holdertable]
		{
			\includegraphics[width=0.31\textwidth]{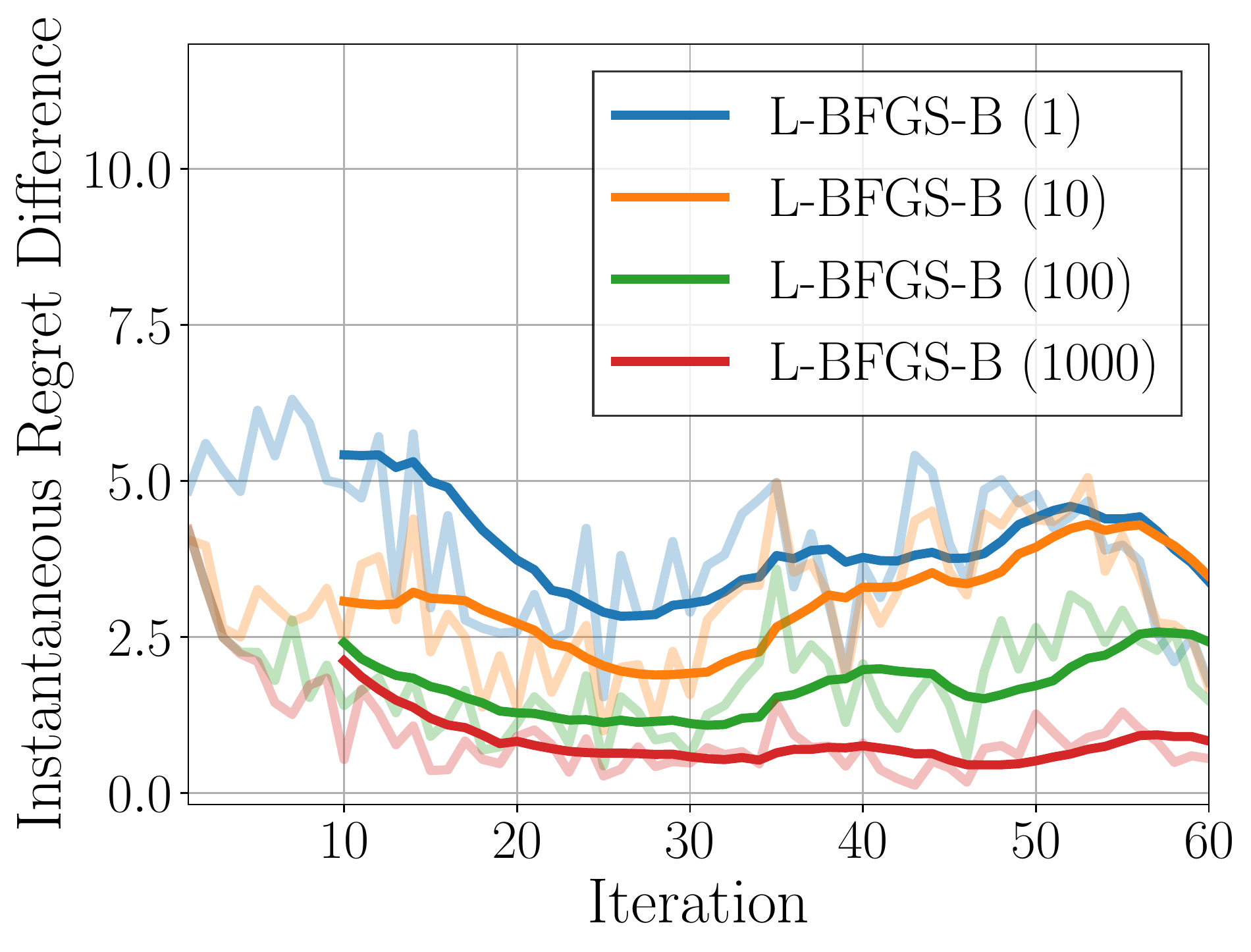}
			\label{fig:thm_213}
		}
	\caption{Empirical results on \thmref{thm:first} and \thmref{thm:second}.
	All settings follow the settings described in \figref{fig:val_thm_1}.}
	\label{fig:val_thm_2}
	\end{center}
\end{figure}

\begin{figure}[t]
	\begin{center}
		\subfigure
		{
			\includegraphics[width=0.31\textwidth]{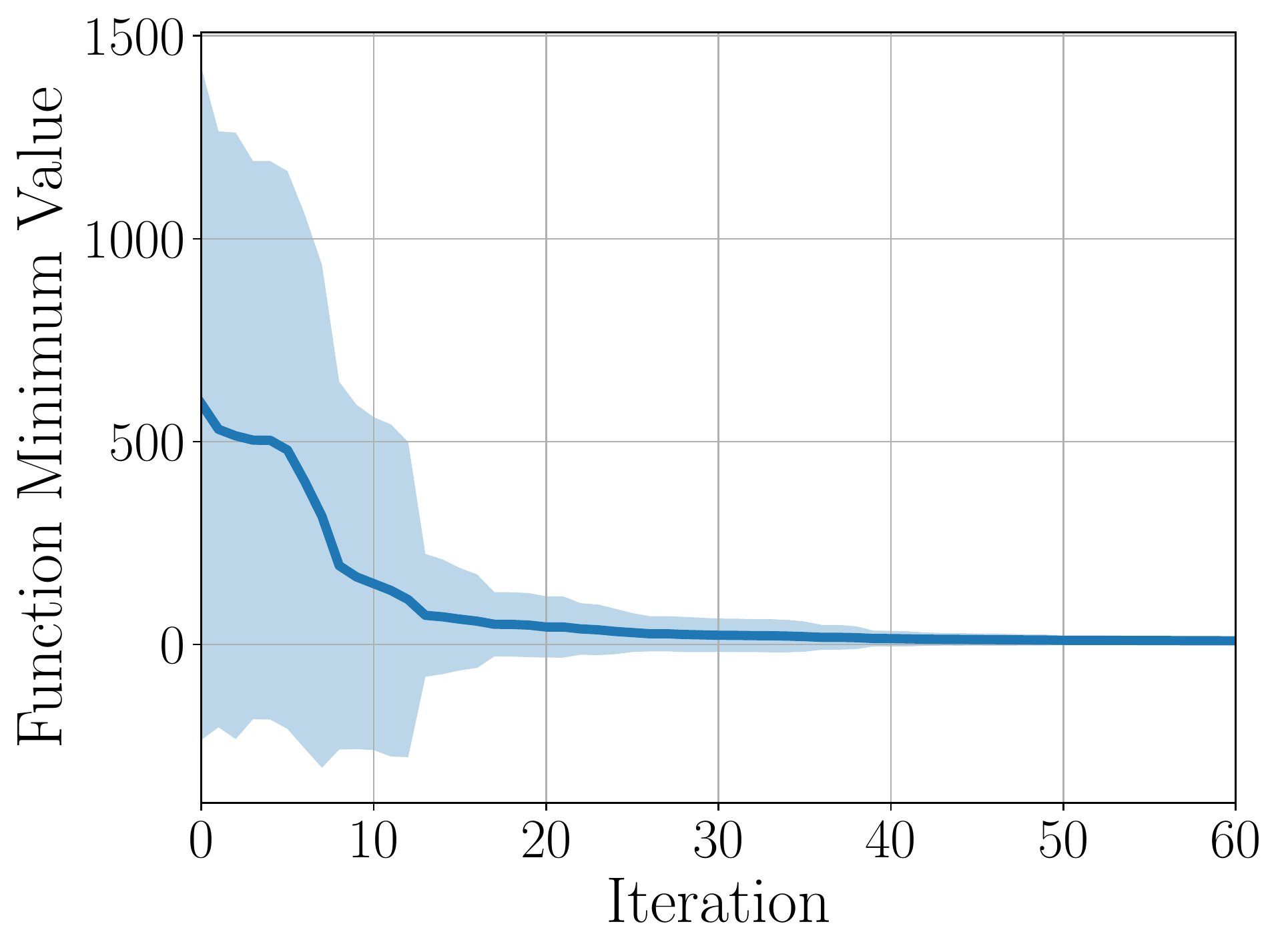}
			\label{fig:thm_301}
		}
		\subfigure
		{
			\includegraphics[width=0.31\textwidth]{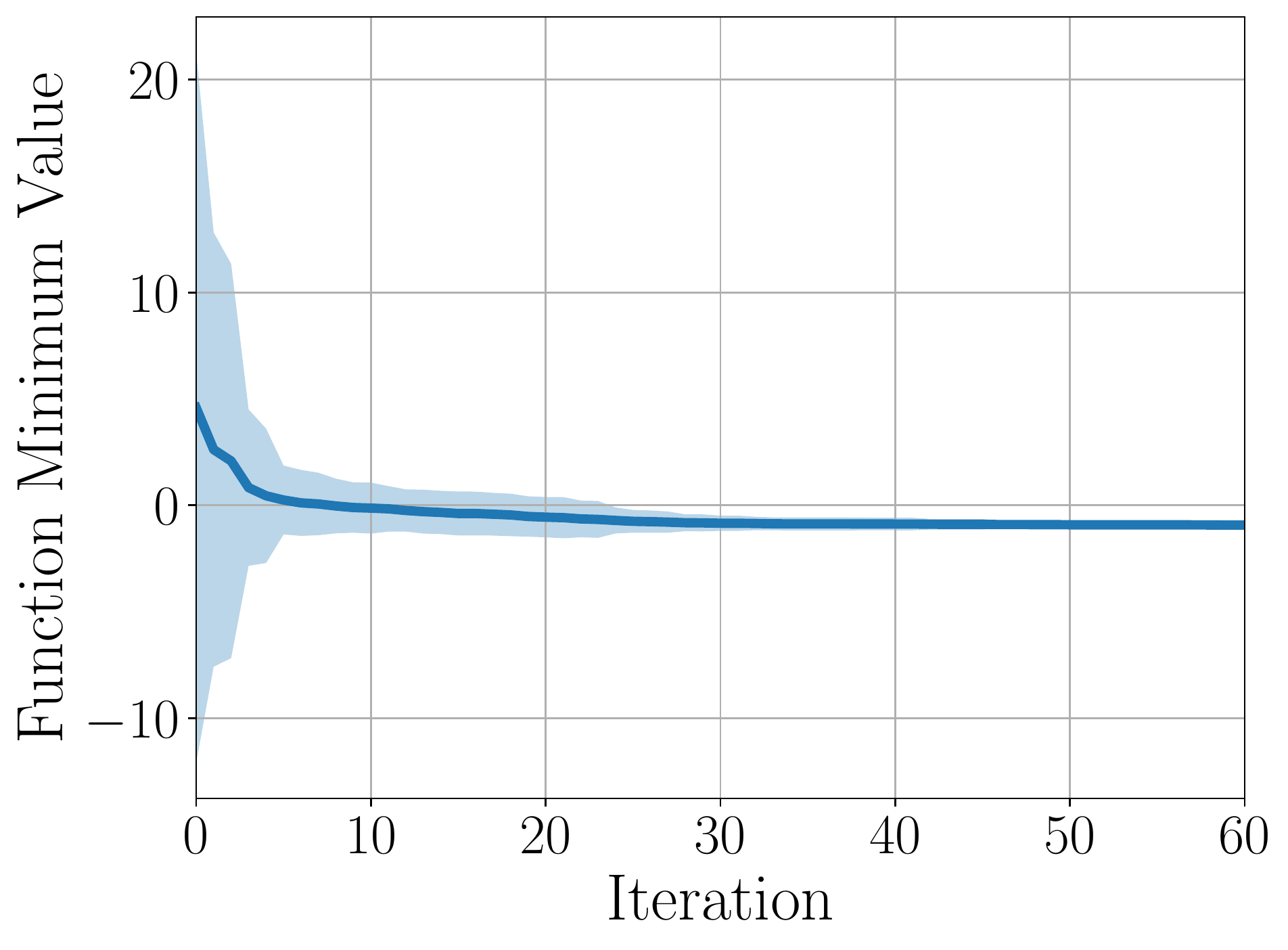}
			\label{fig:thm_302}
		}
		\subfigure
		{
			\includegraphics[width=0.31\textwidth]{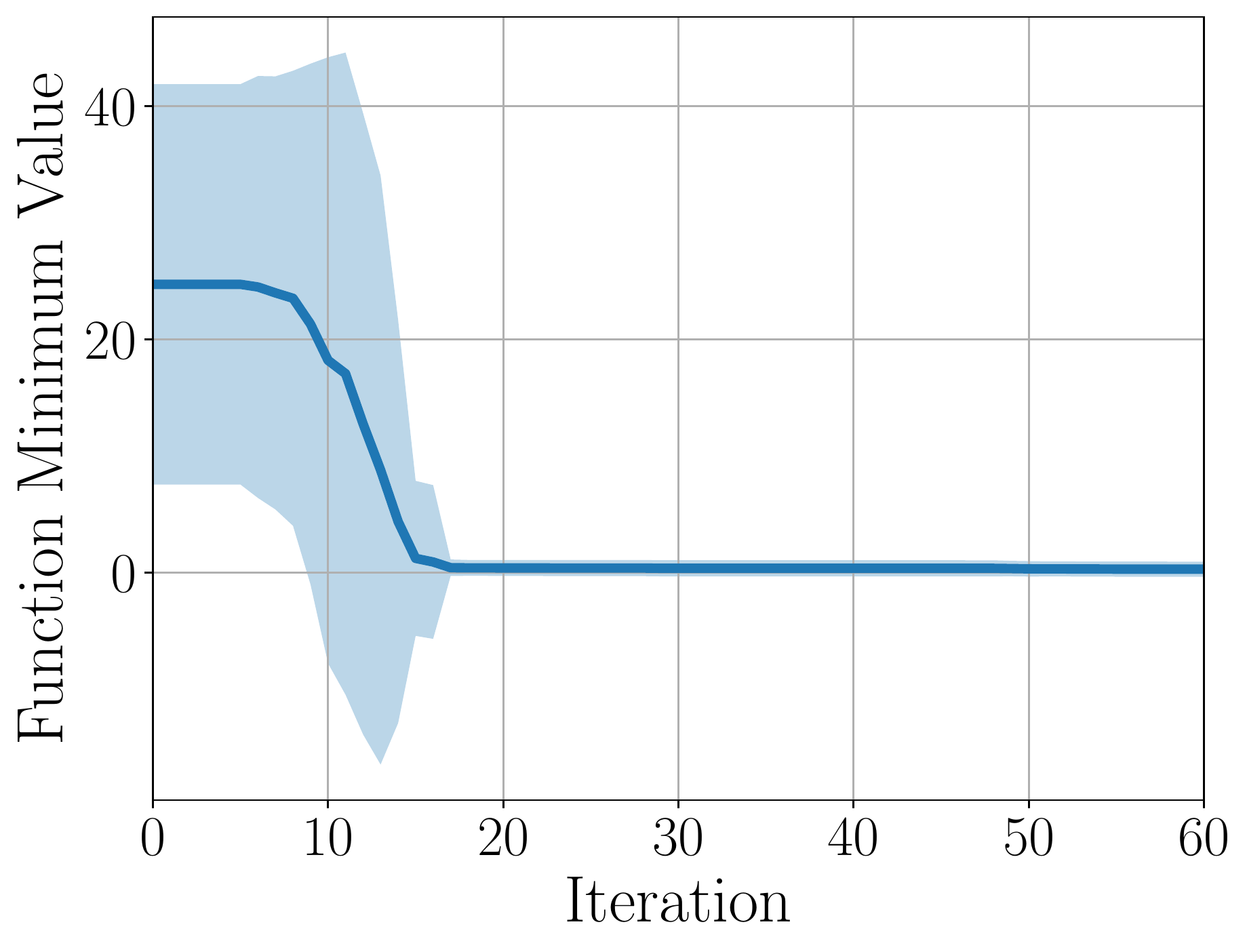}
			\label{fig:thm_303}
		}
		\setcounter{subfigure}{0}
		\subfigure[Rosenbrock]
		{
			\includegraphics[width=0.31\textwidth]{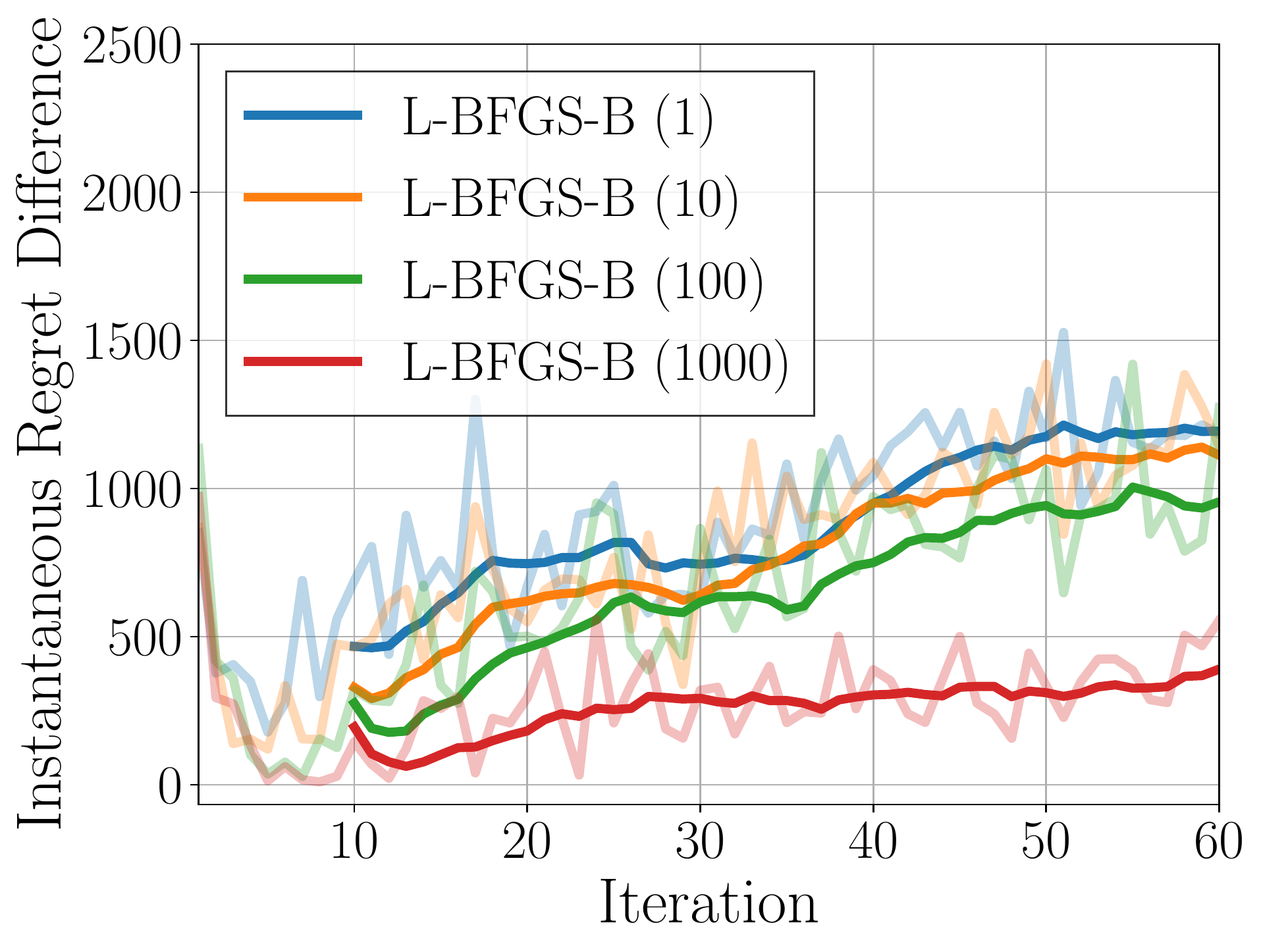}
			\label{fig:thm_311}
		}
		\subfigure[Six Hump Camel]
		{
			\includegraphics[width=0.31\textwidth]{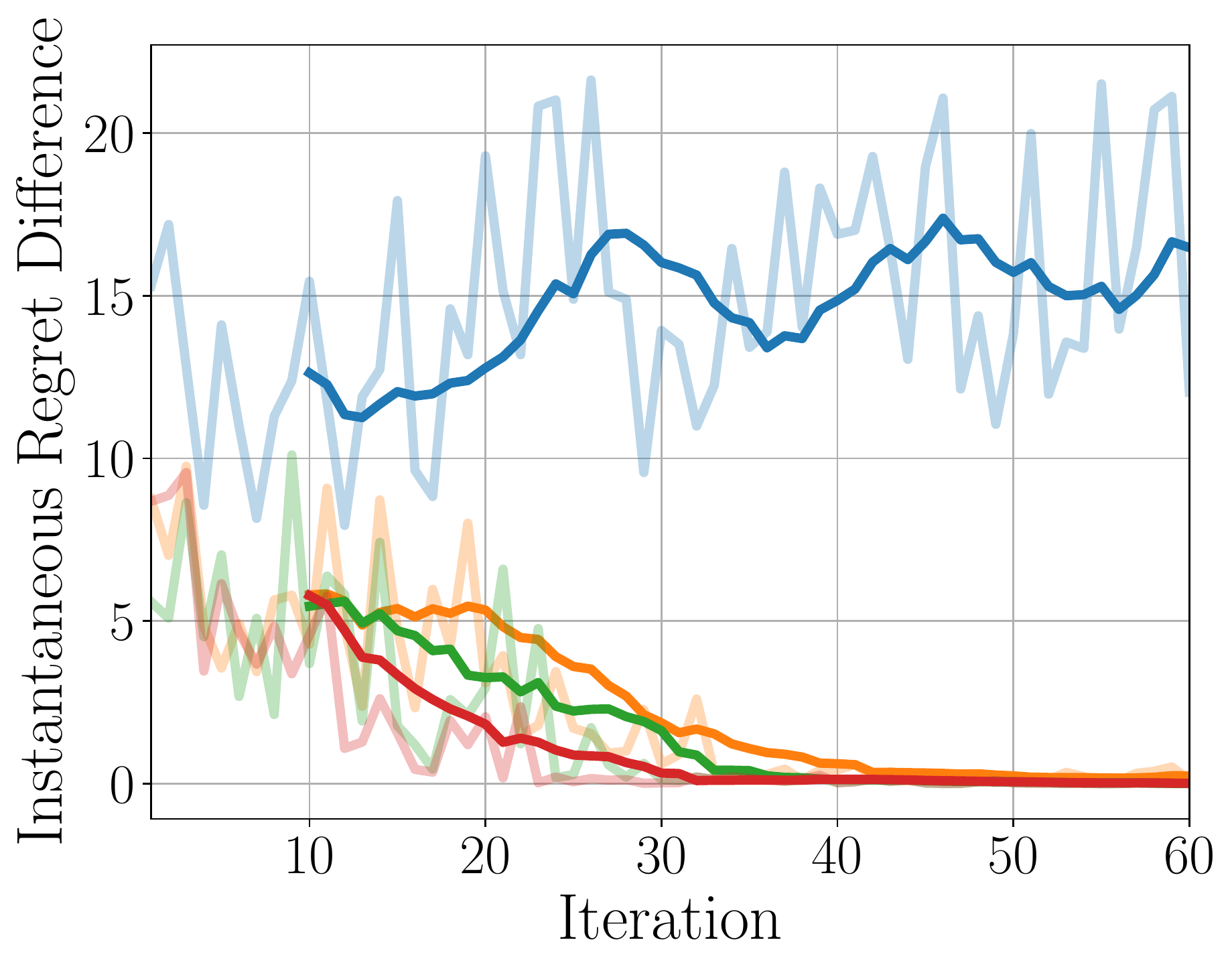}
			\label{fig:thm_312}
		}
		\subfigure[Sphere]
		{
			\includegraphics[width=0.31\textwidth]{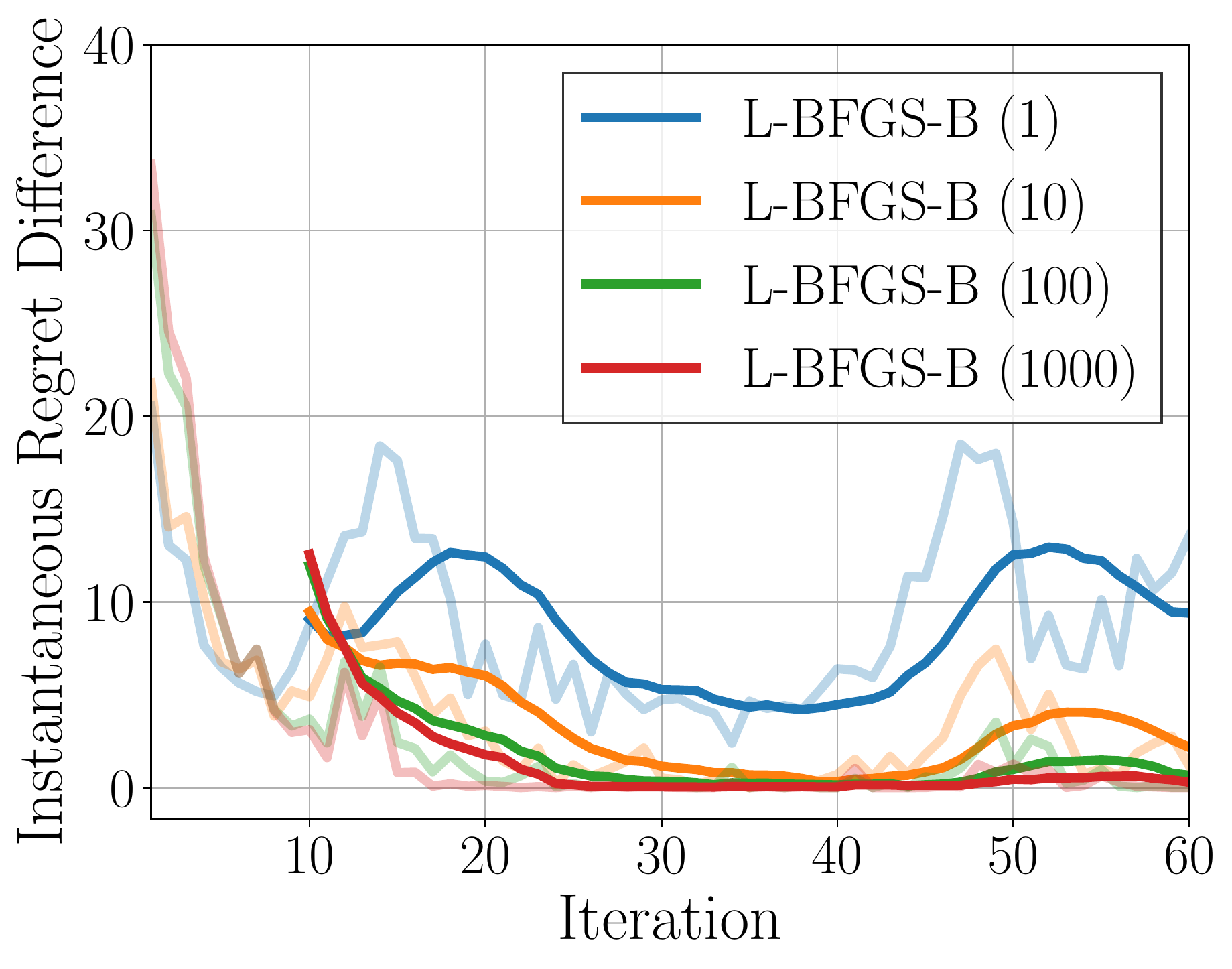}
			\label{fig:thm_313}
		}
	\caption{Empirical results on \thmref{thm:first} and \thmref{thm:second}.
	All settings follow the settings described in \figref{fig:val_thm_1}.}
	\label{fig:val_thm_3}
	\end{center}
\end{figure}

\subsection{On \thmref{thm:second}}
We extend \thmref{thm:first} into the version for a multi-started local optimizer defined in \defref{def:mslo_af}.
To prove the next theorem, we need to prove \lemref{lem:multi_local}.

\begin{lem}
	\label{lem:multi_local}
	Let the number of initial points for a multi-started local optimizer be $N$.
	A global optimizer and a multi-started local optimizer are different with a probability:
	\begin{equation}
		\bbP \big( \bx_{t, g} \neq \bx_{t, m} \big) = \left( 1 - \beta_g \right)^N,
		\label{eqn:lem_3_1}
	\end{equation}
	where $\bx_{t, m}$ is determined by \eqref{eqn:multi_local_acq}.
\end{lem}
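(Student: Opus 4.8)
The plan is to recast the event $\bx_{t, g} \neq \bx_{t, m}$ as a statement about the $N$ independent local runs, and then exploit independence of their initial conditions to factor the probability. The first step is to observe that, by \defref{def:go_af}, the global optimizer $\bx_{t, g}$ attains the strictly largest value of $a(\bx | \calD_{t-1})$ over all of $\calX$, hence over the finite candidate set $\{\bx_{t, l_1}, \ldots, \bx_{t, l_N}\}$ in particular. Since $\bx_{t, m}$ is by \defref{def:mslo_af} the argmax of the acquisition function over exactly this candidate set, we have $\bx_{t, m} = \bx_{t, g}$ if and only if at least one of the $N$ local runs lands on $\bx_{t, g}$. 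Taking the contrapositive, $\bx_{t, m} \neq \bx_{t, g}$ holds precisely when \emph{every} one of the $N$ local runs fails to coincide with the global optimizer.

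Next I would invoke \lemref{lem:prob_both} to make the per-run probability rigorous. The global maximizer of the acquisition function is itself one of the $\rho_t$ local maxima, so reaching it from a randomly drawn initial condition is a well-defined event with probability $\beta_g$ (the $\beta_i$ associated with the global optimum in \lemref{lem:prob_both}); consequently, a single run misses $\bx_{t, g}$ with probability $1 - \beta_g$. Because the $N$ starts are drawn independently, the $N$ ``miss'' events are mutually independent, each carrying probability $1 - \beta_g$.

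Combining the two observations gives
\begin{equation}
\bbP \big( \bx_{t, g} \neq \bx_{t, m} \big)
= \bbP \big( \textrm{all } N \textrm{ runs miss } \bx_{t, g} \big)
= \prod_{i = 1}^{N} \big( 1 - \beta_g \big)
= \big( 1 - \beta_g \big)^N,
\end{equation}
which is the claimed identity \eqref{eqn:lem_3_1}.

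The main obstacle is justifying an \emph{exact} equality rather than a mere inequality, and this hinges on two points I would make explicit. First, the ``if and only if'' in the opening step requires $\bx_{t, g}$ to be the \emph{unique} global maximizer of $a(\bx | \calD_{t-1})$, so that hitting it is the only route for the multi-started argmax to equal it, and hitting it guarantees the argmax selects it; absent strictness, ties at the global level would only yield an inequality. Second, the factorization into $(1 - \beta_g)^N$ relies on the $N$ initial conditions being drawn independently, which is what converts the single-run probability $\beta_g$ supplied by \lemref{lem:prob_both} into a product of $N$ identical factors. I expect the independence of initializations to be the assumption doing the real work here.
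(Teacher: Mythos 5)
Your proof is correct and takes the same route as the paper: the paper's own proof simply states that since the $N$ initial conditions are sampled independently, the $N$ local runs are independent and ``the proof is obvious.'' You have merely spelled out what the paper leaves implicit --- that $\bx_{t,m} \neq \bx_{t,g}$ exactly when all $N$ runs miss the global maximizer (each with probability $1 - \beta_g$ via \lemref{lem:prob_both}), so the probability factors into $(1-\beta_g)^N$ --- and your added caveat about uniqueness of the global maximizer is a fair observation, though it goes beyond what the paper addresses.
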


\begin{proof}
	Since each initial condition of local optimizer is independently sampled, 
	$N$ local optimization methods started from different initial points are independently run.
	Therefore, the proof is obvious.
	\qed
\end{proof}

Because $N \geq 1$ and \eqref{eqn:lem_3_1} is less than one, 
\eqref{eqn:lem_3_1} is decreased as $N$ is increased.
For instance, these are satisfied:
\begin{equation}
	\left( 1- \beta_g \right)^N \leq 1 - \beta_g \quad \textrm{and} \quad
	\lim_{N \rightarrow \infty} \left( 1 - \beta_g \right)^N = 0. \label{eqn:smaller_1}
\end{equation}

By \lemref{lem:multi_local}, we can prove the theorem for the local optimization method started from multiple initial points.
Before introducing \thmref{thm:second}, 
we simply derive \corref{cor:distance}.

\begin{cor}
	\label{cor:distance}
	$l_2$ distance between the acquired points $\bx_{t, g}$ and $\bx_{t, m}$ from \eqref{eqn:global_acq} and \eqref{eqn:multi_local_acq} at iteration ${t}$ is larger than any $\gamma > \epsilon_3 > 0$ with a probability:
	\begin{equation}
		\bbP \big(\| \bx_{t, g} - \bx_{t, m} \|_2 \geq \epsilon_3 \big) \leq \frac{\gamma}{\epsilon_3} \left( 1 - \beta_g \right)^N.
	\end{equation}
\end{cor}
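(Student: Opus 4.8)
The plan is to replicate the argument for \lemref{lem:bounded_distance} almost verbatim, replacing the single-start coincidence probability with the multi-start one furnished by \lemref{lem:multi_local}. Since $\| \bx_{t, g} - \bx_{t, m} \|_2$ is a nonnegative random variable bounded over the compact domain, the natural tool is once more Markov's inequality.

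First I would write, for $\epsilon_3 > 0$,
\begin{equation}
\bbP \big( \| \bx_{t, g} - \bx_{t, m} \|_2 \geq \epsilon_3 \big) \leq \frac{1}{\epsilon_3} \bbE \big[ \| \bx_{t, g} - \bx_{t, m} \|_2 \big].
\end{equation}
Then I would evaluate the expectation by conditioning on the two complementary events $\bx_{t, g} = \bx_{t, m}$ and $\bx_{t, g} \neq \bx_{t, m}$. On the former the distance is identically zero and therefore contributes nothing; on the latter, \lemref{lem:multi_local} supplies the probability $\bbP \big( \bx_{t, g} \neq \bx_{t, m} \big) = (1 - \beta_g)^N$, while the distance itself is at most the diameter $\gamma = \max_{\bx_1, \bx_2 \in \calX} \| \bx_1 - \bx_2 \|_2$ because both points lie in $\calX$. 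This gives
\begin{equation}
\bbE \big[ \| \bx_{t, g} - \bx_{t, m} \|_2 \big] \leq (1 - \beta_g)^N \gamma,
\end{equation}
and substituting back into the Markov bound delivers $\frac{\gamma}{\epsilon_3} (1 - \beta_g)^N$, which is the asserted inequality.

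Since the essential probabilistic content---that $N$ independent restarts miss the global optimizer with probability exactly $(1 - \beta_g)^N$---is already established in \lemref{lem:multi_local}, I do not expect any genuine obstacle here beyond the conditioning bookkeeping. The only point meriting care is the bound on the distance on the event $\bx_{t, g} \neq \bx_{t, m}$: it equals $\gamma$ precisely because $\calX$ is compact and $\gamma$ is its diameter, mirroring the identical step in \lemref{lem:bounded_distance} but now carrying the exponent $N$. In this sense the corollary is purely a quantitative strengthening of \lemref{lem:bounded_distance}, with $(1 - \beta_g)$ replaced throughout by $(1 - \beta_g)^N$.
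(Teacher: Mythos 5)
Your proposal is correct and follows exactly the route the paper intends: the paper's own proof simply states that the corollary ``can be proved in the same manner of \lemref{lem:bounded_distance},'' and your argument is precisely that---Markov's inequality, conditioning on the coincidence event, and substituting the multi-start non-coincidence probability $(1-\beta_g)^N$ from \lemref{lem:multi_local} in place of $(1-\beta_g)$. You have merely written out the details the paper leaves implicit.
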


\begin{proof}
	Because it can be proved in the same manner of \lemref{lem:bounded_distance},
	it is trivial.
	\qed
\end{proof}

We provide the proof of \thmref{thm:second} with the above lemmas.

\begin{proof}
	It is an extension of \thmref{thm:first}.
	By \lemref{lem:lipschitz} and \corref{cor:distance}, it is proved in the same way.
	\qed
\end{proof}

As we mentioned before, because the equations in \eqref{eqn:smaller_1} are satisfied, 
we can emphasize a lower-bound on the probability of the case using a multi-started local optimizer is tighter than the case using a local optimizer.
It implies an appropriate multi-started local optimizer can produce a similar convergence quality 
with the global optimizer without expensive computational complexity.

\section{Empirical Analysis\label{sec:exp}}

We present empirical analyses for \thmref{thm:first} and \thmref{thm:second}, 
demonstrating the acquisition function optimization with global, local, and multi-started local optimizers on various examples:
Beale, Branin, Cosines (2 dim. and 8 dim.), Hartmann6D, Holdertable, 
Rosenbrock, Six Hump Camel, and Sphere functions, which are widely used as benchmark functions in the Bayesian optimization literature.
We use Gaussian process regression with M\'{a}tern 5/2 kernel as a surrogate function and EI as an acquisition function.
In addition, the hyperparameters (e.g., signal scale and lengthscales) 
of Gaussian process regression are found by maximizing the marginal likelihood.
All the experiments are implemented with \texttt{bayeso}~\cite{KimJ2017bayeso}.

To show regret differences, we need to sync up the historical observations for all the methods.
The Bayesian optimization results via DIRECT are compared to the results via local optimization methods,
considering each of the results by DIRECT as a true global solution for the acquisition function given.
At each iteration, four L-BFGS-B algorithms started from \{1, 10, 100, 1000\} different initial points 
find the next point to measure a regret difference.
The points found by L-BFGS-B are only used to compute the regret differences.
The transparent lines described in the lower panels of \figref{fig:val_thm_1} and \figref{fig:val_thm_2}
are the observed regret differences,
and the solid lines are the moving averages of the transparent lines,
each of which is computed as the unweighted mean of the previous 10 steps.

\begin{table}[t]
	\caption{Time (sec.) consumed in optimizing acquisition functions.}
	\label{tab:time}
	\begin{center}
	\begin{tabular}{cccccccccc}
		\toprule
		& \ref{fig:thm_111} & \ref{fig:thm_112} & \ref{fig:thm_113} & \ref{fig:thm_211} & \ref{fig:thm_212} & \ref{fig:thm_213} & \ref{fig:thm_311} & \ref{fig:thm_312} & \ref{fig:thm_313} \\
		\midrule
		DIRECT & 3.434 & 2.987 & 2.306 & 2.508 & 0.728 & 2.935 & 13.928 & 4.639 & 10.707 \\
		L-BFGS-B (1) & 0.010 & 0.004 & 0.052 & 0.023 & 0.026 & 0.017 & 0.005 & 0.010 & 0.030 \\
		L-BFGS-B (10) & 0.096 & 0.036 & 0.515 & 0.224 & 0.253 & 0.177 & 0.050 & 0.100 & 0.311 \\
		L-BFGS-B (100) & 0.977 & 0.363 & 5.173 & 2.224 & 2.533 & 1.760 & 0.504 & 0.969 & 3.048 \\
		L-BFGS-B (1000) & 9.720 & 3.633 & 51.818 & 22.306 & 25.305 & 17.629 & 5.049 & 9.682 & 30.764 \\
		\bottomrule
	\end{tabular}
	\end{center}
\end{table}

As shown in \figref{fig:val_thm_1} and \figref{fig:val_thm_2},
the regret difference at each iteration is decreased as $N$ is increased,
which supports the main theorems.
For some cases, the regret differences are slightly increased as the optimization step is repeated.
It means that the acquisition function at the latter iteration has relatively many local optima, 
which is usually observed in the Bayesian optimization procedures.
Furthermore, \tabref{tab:time} shows Bayesian optimization with a multi-started optimizer is a fair and efficient choice for most of the cases.
However, the cases using 1000-started optimizer tends to be slower than the ones using DIRECT,
which implies choosing the adequate number of initial conditions for a multi-started local optimizer is significant
and it should be carefully selected.

\section{Conclusion\label{sec:conclusion}}

In this paper, we theoretically and empirically analyze the upper-bound of instantaneous regret difference between two regrets occurred by global and local optimizers for an acquisition function.
The probability on this bound becomes tighter, using a multi-started local optimizer instead of the local optimizer.
Our experiments show our theoretical analyses can be supported.




\end{document}